\newtheorem{theorem}{Theorem}[section]
\newtheorem{lemma}[theorem]{Lemma}
\newtheorem{counter-example}[theorem]{Counter example}
\newtheorem{open question}[theorem]{Open question}
\newtheorem{corollary}[theorem]{Corollary}
\newcommand{\inner}[1]{{\left\langle #1 \right\rangle}}
\newcommand{\var}{\mathrm{Var}}
\newcommand{\E}{\mathbb{E}}
\newcommand{\reals}{\mathbb{R}}
\newcommand{\sphere}{\mathbb{S}}
\newcommand{\so}{\mathbb{SO}}
\newcommand{\sign}{\mathrm{sign}}
\newcommand{\spec}{\mathrm{sp}}
\newcommand{\diag}{\mathrm{diag}}
\newcommand{\bx}{\mathbf{x}}
\newcommand{\by}{\mathbf{y}}
\newcommand{\bw}{\mathbf{w}}
\newcommand{\bg}{\mathbf{g}}
\newcommand{\vbx}{\vec{\mathbf{x}}}
\newcommand{\vby}{\vec{\mathbf{y}}}
\newcommand{\cc}{\mathcal{C}}
\newif\ifdraft
\title{Existence of Adversarial Examples for Random Convolutional Networks via  Isoperimetric Inequalities on $\so(d)$}
\author{Amit Daniely}
\begin{document}

\maketitle

\begin{abstract}
   We show that adversarial examples exist for various random convolutional networks, and furthermore, that this is a relatively simple consequence of the isoperimetric inequality on the special orthogonal group $\so(d)$. This extends and simplifies a recent line of work \citep{daniely2020most, bubeck2021single, bartlett2021adversarial, montanari2023adversarial} which shows similar results for random fully connected networks.
\end{abstract}

\section{Introduction}
Adversarial examples, first observed by~\citet{szegedy2014intriguing},  were studied extensively in recent years, with several attacks (e.g. ~\cite{athalye2018obfuscated, carlini2017adversarial, carlini2018audio, goodfellow2014explaining, grosse2017statistical}) and defense methods (e.g. \cite{papernot2016distillation, papernot2017practical, madry2017towards, wong2018provable, feinman2017detecting}) being developed, as well as various attempts to explain why they exists (e.g. \cite{fawzi2018adversarial, shafahi2018adversarial, shamir2019simple, schmidt2018adversarially, bubeck2019adversarial}).
In particular, one line of work aims at explaining the phenomenon of adversarial examples by proving that they exists and can be found in random networks. \citet{daniely2020most} show that adversarial examples exists in random constant depth fully connected ReLU networks in which each layer reduces the width. Moreover, they showed that gradient flow, as well as gradient decent with sufficiently small step size, will find these adversarial examples. \citet{bartlett2021adversarial} following \citet{bubeck2021single} improved on \citet{daniely2020most} as they replaced the assumption that the dimension decreases with a very mild assumption that there is no exponential gap between the width of different layers. They also showed that gradient decent with a single step will find an adversarial example. \citet{montanari2023adversarial} further improved these results, as they completely dropped the width requirement.

We continue this line of work. Our contribution is twofold. First, we extend the family of architectures for which these results are applicable. We show the existence of adversarial examples in random {\em convolutional} ReLU networks of any constant depth, with no restriction on the width. For odd activations (such as sigmoids like $\sigma(x) = \tan^{-1}(x)$ and $\sigma(x) = \frac{e^x-e^{-x}}{e^x+e^{-x}}$) we show that adversarial examples exists for a broader class of architectures: We show that adversarial examples exists provided that the first layer is convolutional. This result is valid for any layered architecture in the remaining layers, with no restriction on the width nor on the depth. Our second contribution is that we substantially simplify the proofs of such results, and show that the existence of adversarial examples is a relatively simple consequence of the isoperimetric inequality on the special orthogonal group $\so(d)$. On the flip side as opposed to previous papers in this line of research, our techniques are less constructive, and we do not present an algorithm for finding an adversarial perturbation.

\subsection{Related Work}

Several recent theoretical studies have explored the fundamental reasons behind the existence of adversarial examples in machine learning. \citet{schmidt2018adversarially} demonstrate that training adversarially robust classifiers can require a significantly larger sample complexity compared to standard training, while \citet{bubeck2019adversarial} highlight scenarios where adversarially robust training is computationally more demanding.

\citet{fawzi2018adversarial, mahloujifar2019curse} leverage concentration of measure results to show that for various subsets of $\reals^d$,
such as the sphere, ball, or cube, any partition of the space into a small number of subsets with non-negligible measure (with respect to the uniform distribution) will necessarily lead to an abundance of adversarial examples. In other words, most points will have a nearby example that belongs to a different subset of the partition, implying that \emph{any} classifier implementing such a partition will be susceptible to adversarial examples. \citet{shafahi2018adversarial} extend these findings to classification tasks where examples are generated by specific generative models.

Further, \citet{vardi2022gradient} and \citet{melamed2023adversarial} establish the existence of adversarial examples in trained depth-two neural networks. Lastly, \citet{shamir2019simple} investigate adversarial vulnerability with respect to the $\ell^0$ norm.

\section{Setting}

\subsubsection*{Notation}
We will use $M_{d\times n}$ to denote the space if $d\times n$ matrices.
We will use the standard Euclidean/Frobenius norm on the spaces $\reals^d$, $\left(\reals^d\right)^n$ and $M_{d\times n}$. We will denote the spectral norm of $A\in M_{d\times n}$ by $\|A\|_\spec$. Similarly, for a sequence $\vbx=(\bx_1,\ldots,\bx_n)\in \left(\reals^d\right)^n$ of $n$ vectors in $\reals^d$
we will denote by $\|\vbx\|_\spec$ the spectral norm of the $d\times n$ matrix whose $i$'th column is $\bx_i$. For a function $\sigma:\reals\to\reals$ and $\bx\in\reals^d$ we  denote by $\sigma(\bx)$ the vector $(\sigma(x_1),\ldots,\sigma(x_d))$. We will use $\gtrsim$ for denoting inequality up to a multiplicative constant.

\subsubsection*{Convolutional Networks}
A {\em layer} is a function $F:(\reals^{d_1})^n\to\reals^{d_2}$ of the form $F(\bx) = \sigma(W\bx)$ for a $d_2\times (d_1n)$ matrix $W$ and $\sigma:\reals\to\reals$ (that is called the {\em activation function} of the layer). A layered network is a composition of several layers.
A layer is {\em convolutional of width $w$, stride $s$ and $d_1$ channels}, for $w\le n$ such that $s$ divides $n-w$, if it is of the form $F_W(\vbx) = (\sigma(WT_{0,s,w}(\vbx)),\ldots,\sigma(WT_{\frac{n-w}{s},s,w}(\vbx)))$ where $T_{i,s,w}(\vbx) = (\bx^{is+1},\ldots,\bx^{is+w})$ and $W$ is a  $d_2\times (d_1w)$ matrix. We note that for the sake of simplicity, we consider one-dimensional convolutions, despite that our results can be phrased for general convolutional layers. We also note that the definition of convolutional layer encompasses fully connected layers as well, by taking the width $w$ to be $n$.

\subsubsection*{Random Convolutional Networks}
A {\em random convolutional layer} is a random function $F_W$ where $W$ is a random $d_2\times (d_1w)$ matrix.  We say that $W$  is {\em regular}, if for any orthogonal $U\in M_{(d_1w),(d_1w)}$ the distribution of $W$ and $WU$ are identical. 
We say that $W$ is {\em Xavier} \cite{glorot2010understanding} if its entries are i.i.d. centered Gaussians with variance $\frac{1}{d_1w}$ (note that a Xavier matrix is necessarily regular).
We say that $F_W$ is {\em regular/Xavier} if $W$ is a regular/Xavier random matrix.
We note that it is common to initialize neural networks with
regular random matrices. For instance Xavier matrices and random orthogonal matrices are standard choices for initial weights.

\section{Main Results}

Our first result assumes that the activation functions is odd (that is, satisfy $\sigma(-x) = -\sigma(x)$). Examples to such activations are sigmoids like $\sigma(x) = \tan^{-1}(x)$ and $\sigma(x) = \frac{e^x-e^{-x}}{e^x+e^{-x}}$. Under this assumption, we can show that adversarial examples exist in a quite general setting. That is, we show that adversarial examples exist if the random network is a regular random convolutional layer, followed by {\em any} layered network (with no restriction on the width, depth, etc.).

\begin{theorem}\label{thm:odd}
    Let $f:\left(\reals^d\right)^n\to\reals$ be a random layered network whose first layer is a regular random convolutional layer that is independent from the remaining layers. Assume that the activations in all layers are odd and that $d$ is even. Fix $\vbx_0\in \left(\reals^d\right)^n$. Then, w.p. $1-2e^{-\frac{\tau^2}{32}}$ over the choice of $f$ either $f(\vbx_0)=0$ or there is $\vbx$ such that $\|\vbx-\vbx_0\| \le \left(\frac{\tau }{\sqrt{d-2}}\cdot\frac{\|\vbx_0\|_\spec}{\|\vbx_0\|}\right)\cdot \|\vbx_0\|$ and $\sign(f(\vbx))\ne \sign(f(\vbx_0))$
\end{theorem}
Two remarks are in order. The first is that for any $\vbx_0\in \left(\reals^d\right)^n$ we have $\frac{1}{\sqrt{\min(d,n)}}\le \frac{\|\bx_0\|_\spec}{\|\bx_0\|}\le 1$, and that for ``typical" (e.g. random) $\vbx_0$ we have $ \frac{\|\bx_0\|_\spec}{\|\bx_0\|}\approx   \frac{1}{\sqrt{\min(d,n)}}$. Thus, Theorem \ref{thm:odd} guarantees that for any $\vbx_0$, w.h.p. there is an adversarial example $\vbx$ with $\|\vbx-\vbx_0\| \le\frac{1}{\sqrt{d}}\|\vbx_0\|$, which essentially matches state of the art~\citep{daniely2020most, bartlett2021adversarial, bubeck2021single, montanari2023adversarial} for fully connected networks ($n=1$). For a ``typical" $\vbx_0$ the guarantee is stronger and implies that there is an adversarial example $\vbx$ with $\|\vbx-\vbx_0\| \le\frac{1}{\sqrt{\min(d^2,dn)}}\|\vbx_0\|$. The second remark is about the possibility that $f(\vbx_0)=0$. We note that for a ``reasonable" model of random  networks, this probability is $o_d(1)$ or even $0$. For such models, the conclusion of Theorem \ref{thm:odd} implies that w.p. $(1-o_d(1)\left(1-2e^{-\frac{\tau^2}{32}}\right)$ over the choice of $f$ there is $\vbx$ such that $\|\vbx-\vbx_0\| \le \left(\frac{\tau }{\sqrt{d-2}}\cdot\frac{\|\vbx_0\|_\spec}{\|\vbx_0\|}\right)\cdot \|\vbx_0\|$ and $\sign(f(\vbx))\ne \sign(f(\vbx_0))$.

Our second result considers constant depth random convolutional networks with ReLU activation.

\begin{theorem}\label{thm:relu_nice}
    Let $f:(\reals^d)^n\to\reals$ be a random layered network with $l$ independent convolutional Xavier layers with ReLU activation, except the last layer which has linear activation.
    Assume that the number of channels in each layer, as well as $d$ are all $\omega(\log(nd))$. Fix $\vbx_0\in \left(R\cdot\sphere^{d-1}\right)^n$. Then, w.p. $(1-o_d(1))\left(1-2e^{-\Omega(\tau^2/\log^2(d))}\right)$ over the choice of $f$, either $f(\bx_0)=0$ or there is $\vbx$ such that $\|\vbx-\vbx_0\| \le \left(\frac{\tau }{\sqrt{d-2}}\cdot\frac{\|\vbx_0\|_\spec}{\|\vbx_0\|}\right)\cdot \|\vbx_0\|$ and $\sign(f(\vbx))\ne \sign(f(\vbx_0))$. The asymptotic notations depend only on the depth $l$.
\end{theorem}

\section{Isoperimetric inequalities for $\so(d)$ and $\so(d)$-metric-spaces}
Let $\so(d)$ the special orthogonal group. That is, $\so(d)$ is the group of orthogonal matrices with determinant $1$. 
We will view $\so(d)$ as a metric space w.r.t. the metric induced by the Frobenius norm. We note that this metric is $\so(d)$-invariant in the sense that $d(UV_1,UV_2) = d(V_1,V_2)$ for any $U,V_1,V_2\in\so(d)$.
Let $\mu$ be the uniform (Haar) probability measure on $\so(d)$. We note that $\mu$ is the unique probability measure on $\so(d)$ that is $\so(d)$-invariant. That is, it is the unique probability measure on $\so(d)$ that satisfies $\mu(UA) = \mu(A) = \mu(AU)$ for any measurable $A\subseteq\so(d)$ and any $U\in \so(d)$.

The results in this paper will make a crucial use of isoperimetric inequalities on $\so(d)$, which we introduce next. To this end, for $A\subseteq \so(d)$, $U\in \so(d)$ and $\epsilon>0$ we let
\[
d(A,U) = \min_{V\in A}\|V-U\|
\]
and
\[
A_\epsilon = \left\{ U\in \so(d) : d(A,U)\le \epsilon \right\}
\]
Isoperimetric inequalities shows that for relatively small $\epsilon>0$, $\mu(A_\epsilon)\gg\mu(A)$. These inequalities are based on the concentration of measure property on $\so(d)$.

\begin{theorem}[Measure concentration for $\so(d)$. E.g. \cite{meckes2019random} section 5.3]
    Let $f:\so(d)\to\reals$ be $L$-Lipschitz w.r.t. the Frobenius metric. We have $\mu\left(f \ge \E_\mu f + \epsilon\right)\le e^{-\frac{(d-2)\epsilon^2}{8L^2}}$
\end{theorem}

\begin{corollary}[Isoperimetric inequality for $\so(d)$]
    For $A\subseteq \so(d)$ we have $\mu(A_\epsilon) \ge 1 - e^{-\frac{(d-2)\epsilon^2(\mu(A))^2}{8}}$
\end{corollary}
\begin{proof}
    Define $f(x) = \max(0,\epsilon - d(A,x))$. We note that $f$ is $1$-Lipschitz. Also, $f(x)\ge \epsilon\cdot1_A(x)$. Hence, we have $\E_\mu f\ge\epsilon\E_\mu 1_A=\mu(A)\epsilon$. This implies that
    \[
    \mu\left(A_\epsilon^\complement\right) = \mu(f = 0) \le \mu\left(f \le \E_\mu f -\mu(A)\epsilon\right)\le e^{-\frac{(d-2)\epsilon^2(\mu(A))^2}{8}}
    \]
\end{proof}

We next extend the $\so(d)$-isoperimetric inequalities to metric spaces on which $\so(d)$ act. Let $X$ be a metric space. We say that $\so(d)$ {\em acts on $X$} if there is a mapping $(U,\bx)\mapsto U\bx$ such that  for any $U,V\in \so(d)$ and $\bx,\by\in X$ we have (i) $V(U\bx) = (VU)\bx$, (ii) $I\bx=\bx$, and (iii) $d(U\bx,U\by) = d(\bx,\by)$. We will refer to a metric space on which $\so(d)$ acts as an {\em $\so(d)$-metric-space}.

Examples of $\so(d)$-metric-space are $\reals^d$ an $\sphere^{d-1}$ (which are the input spaces for fully connected networks). Here, the action of $\so(d)$ is standard matrix multiplication. Additional examples are $\left(\reals^d\right)^n$ and $\left(\sphere^{d-1}\right)^n$ (which are the input spaces for convolutional networks). Here, the action is
\[
U\cdot(\bx_1,\ldots,\bx_n):= (U\bx_1,\ldots,U\bx_n)
\]
We say that an $\so(d)$-metric-space is {\em transitive} if for any $\bx,\by\in X$ there is $U\in \so(d)$ such that $\by = U\bx$. It is clear $\sphere^{d-1}$ is transitive and that for any $\so(d)$-metric-space $X$ and any $\bx\in X$, the {\em orbit} of $\bx$, i.e. $\cc(\bx) = \{U\bx : U\in\so(d)\}$, is transitive. On the other hand, $\reals^d$ as well as $\left(\reals^d\right)^n$ and $\left(\sphere^{d-1}\right)^n$ are not transitive. We define the Haar probability measure on a transitive $\so(d)$-metric-space $X$ as follows. Choose some $\bx\in X$. For any $A\subseteq X$ we let 
\[
\mu_X(A) = \mu(\{U\in\so(d) : U\bx\in A\})
\]
Note that $\mu_X$ in the l.h.s. denotes the Haar measure on $X$ while in the r.h.s. $\mu$ denotes the Haar measure on $\so(d)$.  We note that the definition of $\mu_X$ does not depend on the choice of $\bx$. Indeed, for any $\by\in X$, since $X$ is transitive we have $\by = V\bx$ for some $V\in\so(d)$. Hence,
\begin{eqnarray*}
\mu(\{U\in\so(d) : U\by\in A\}) &=& \mu(\{U\in\so(d) : UV\bx\in A\})
\\
&=& \mu(\{U\in\so(d) : U\bx\in A\}\cdot V^{-1})    
\\
&\stackrel{\mu\text{ is $\so(d)$-invariant}}{=}& \mu(\{U\in\so(d) : U\bx\in A\})    
\end{eqnarray*}
We also note that $\mu_X$ is $\so(d)$-invariant. Indeed, for $A\subseteq X$, $V\in \so(d)$ and $\bx\in X$ we have
\begin{eqnarray*}
\mu_X(VA) &=& \mu(\{U\in\so(d) : U\bx\in VA\})
\\
&=& \mu(\{U\in\so(d) : V^{-1}U\bx\in A\})
\\
&=& \mu(V\cdot\{U\in\so(d) : U\bx\in A\}\cdot )    
\\
&\stackrel{\mu\text{ is $\so(d)$-invariant}}{=}& \mu(\{U\in\so(d) : U\bx\in A\})    
\\
&=& \mu_X(A)
\end{eqnarray*}
And similarly $\mu_X(AV)=\mu_X(A)$.

We say that the action of $\so(d)$ on an $\so(d)$-metric-space $X$ is {\em $L$-Lipschitz} (or that $X$ is $L$-Lipschitz) if for any $\bx$ the mapping $U\mapsto U\bx$ is $L$-Lipschitz. 
We note that in the case that $X$ is transitive, the action is $L$-Lipschitz if the mapping $U\mapsto U\bx_0$ is $L$-Lipschitz for some $\bx_0\in X$. Indeed, in this case, given any $\bx\in X$, there is $V\in\so(d)$ such that $\bx = V\bx_0$. Now, the mapping $U\mapsto U\bx$ is $L$-Lipschitz as the  composition of the $1$-Lipschitz mapping $U\mapsto UV$ followed by the $1$-Lipschitz mapping $U\mapsto U\bx_0$. We will use the following Lemma
\begin{lemma}\label{lem:conv_inp_space_is_lip}
    Let $\vbx = (\bx_1,\ldots,\bx_n)\in \left(\reals^d\right)^n$. We have that the action of $\so(d)$ on $\cc(\vbx)$ is $\|\vbx\|_\spec$-Lipschitz.
\end{lemma}
\begin{proof}
    Let $X$ be $d\times n$ the matrix whose $i$'th column in $\bx_i$. Since $\cc(\vbx)$ is transitive it is enough to show that $U\mapsto U\vbx$ is $\|\vbx\|_\spec$-Lipschitz. Indeed, for $U,V\in\so(d)$ we have
    \begin{eqnarray*}
        \| U\vbx - V\vbx\| &=&   \| ((U-V)\bx_1,\ldots,(U-V)\bx_n)\|
        \\
        & = & \| (U-V)X \|
        \\
        &\le & \| U-V \|\cdot \| X \|_\spec
    \end{eqnarray*}
\end{proof}
We will use the following generalization of the isoperimetric inequality on $\so(d)$. To this end, for $A\subseteq X$, $\bx\in X$ and $\epsilon>0$ we let $d(A,\bx) = \min_{\by\in A}d(\by,\bx)$ and $A_\epsilon = \left\{ \bx\in X : d(A,\bx)\le \epsilon \right\}$. 
\begin{theorem}[Isoperimetric inequality for $\so(d)$-metric-spaces]\label{thm:iso_for_inv}
Let $X$ be transitive $\so(d)$-metric-space. Assume that the action of $\so(d)$ is $L$-Lipschitz. Then, for any $A\subseteq X$ we have $\mu_X(A_\epsilon) \ge 1 - e^{-\frac{(d-2)\epsilon^2(\mu_X(A))^2}{8L^2}}$    
\end{theorem}
\begin{proof}
Choose some $\bx_0\in X$ and let $f:\so(d)\to X$ be the function $f(U) = U\bx_0$. We note that $f$ is $L$-Lipschitz and that $\mu(f^{-1}(C))=\mu_X(C)$ for any $C\subseteq X$. Now we have
\begin{eqnarray*}
    \mu_X(A_\epsilon) &=& \mu(f^{-1}(A_\epsilon))
    \\
    &\stackrel{(f^{-1}(A))_{\epsilon/L}\subseteq f^{-1}(A_\epsilon)}{\ge} & \mu((f^{-1}(A))_{\epsilon/L})
    \\
    &\stackrel{\so(d)\text{-isoperimetric inequality}}{\ge} & 1 - e^{-\frac{(d-2)\epsilon^2(\mu(f^{-1}(A)))^2}{8L^2}}
    \\
    &= & 1 - e^{-\frac{(d-2)\epsilon^2(\mu_X(A))^2}{8L^2}}
\end{eqnarray*}
\end{proof}

\section{Proof of the Main Results}
Let $X$ be an $L$-Lipschitz $\so(d)$-metric-space. For $f:X\to Y$ and $U\in\so(d)$ we define the function $Uf:X\to Y$ by $(Uf)(\bx)=f(U^{-1}\bx)$.
Let $f$ be a random function from $X$ to $\reals$. We say that $f$ is {\em $\so(d)$-invariant} if for any $U\in\so(d)$ the distribution of $Uf$ and $f$ are identical. An example for an $\so(d)$-invariant random function is a random convolutional or fully connected network.
\begin{lemma}\label{lem:conv_nets_are_inv}
    Let $f:\left(\reals^d\right)^n\to\reals$ be a random layered network whose first layer is a regular random convolutional layer that is independent from the remaining layers. Then, $f$ is $\so(d)$-invariant.
\end{lemma}
\begin{proof}
    We have $f = g\circ F_W$ where $F_W(\vbx) = (\sigma(WT_{1,s,w}(\vbx)),\ldots,\sigma(WT_{\frac{n-w}{s},s,w}(\vbx)))$ is a convolutional layer for a regular random matrix $W$ that is independent of $g$. For $U\in\so(d)$ We have
    \begin{eqnarray*}
        Uf(\vbx) &=& g\circ F_W(U^{-1}\vbx) 
        \\
        &=& g\left( \sigma(WT_{1,s,w}(U^{-1}\vbx)),\ldots,\sigma(WT_{\frac{n-w}{s},s,w}(U^{-1}\vbx)) \right)
        \\
        &=& g\left( \sigma(WU^{-1}T_{1,s,w}(\vbx)),\ldots,\sigma(WU^{-1}T_{\frac{n-w}{s},s,w}(\vbx)) \right)
        \\
        &=& g\circ F_{WU^{-1}}(\vbx)
    \end{eqnarray*}
    That is $Uf = g\circ F_{WU^{-1}}$. This implies that $Uf$ and $f=g\circ F_{W}$ are identically distributed: Since $W$ is regular, $W$ and $WU^{-1}$ are identically distributed and hence also $F_{W}$ and $F_{WU^{-1}}$. Since $g$ is independent of $W$, we conclude that $f$ and $Uf$ are identically distributed as well.
\end{proof}

We say that a random function $f:X\to\reals$ is $(q,p)$-balanced if w.p. $\ge q$ over the choice of $f$ we have that $\mu_X(f\ge 0) \ge p$ and $\mu_X(f\le 0) \ge p$. We note that if $f$ is a layered network with an odd activation, and $X=\cc(\vbx_0)$ for some $\vbx_0\in (\reals^d)^n$ then $f$ is $(1,1/2)$-balanced. Indeed, if $A = \{\vbx : f(\bx)\ge 0\}$ then, since $f$ is odd, $A^\complement = -I\cdot A$. Hence,
\[
\mu_X(f\ge 0) = \mu_X(A) \stackrel{\mu_X\text{ is $\so(d)$ invariant}}{=} \mu_X(-I\cdot A)= \mu_X(A^\complement)= \mu_X(f\le 0)
\]
In the second equality from the left we relied on the fact that $d$ is even and hence $-I\in\so(d)$.
The following Theorem shows that a random function that is balanced and $\so(d)$-invariant has adversarial examples w.h.p. 

\begin{theorem}\label{thm:main}
    Let $X$ be transitive $L$-Lipschitz $\so(d)$-metric-space and let $\bx_0\in X$. Let $f:X\to\reals$ be a random function that is $\so(d)$-invariant and $(q,p)$-balanced. Then, w.p. $q(1 - 2e^{-\frac{(d-2)\epsilon^2p^2}{8L^2}})$ either $f(\bx_0)=0$ or there is $\bx$ such that $d(\bx,\bx_0)\le \epsilon$ and $\sign(f(\bx))\ne \sign(f(\bx_0))$
\end{theorem}
Before proving Theorem \ref{thm:main} we note that it implies Theorem \ref{thm:odd}. Indeed, we can take $X=\cc(\vbx_0)$ which is the $\|\vbx_0\|$-Lipschitz (lemma \ref{lem:conv_inp_space_is_lip}), $\epsilon = \frac{\tau\|\vbx_0\|_\spec}{\sqrt{d-2}}$ and $f$ to be a random layered network with odd activations, whose first layer is a regular random convolutional layer that is independent from the remaining layers. Since $f$ is $\so(d)$-invariant and $(1,1/2)$-balanced, Theorem \ref{thm:main} implies that w.p. $1 - 2e^{-\frac{\tau^2}{32}}$ either $f(\bx_0)=0$ or there is $\bx$ such that $d(\bx,\bx_0)\le  \frac{\tau\|\vbx_0\|_\spec}{\sqrt{d-2}}$ and $\sign(f(\bx))\ne \sign(f(\bx_0))$. This implies Theorem \ref{thm:odd}.

Theorem \ref{thm:main} also implies Theorem \ref{thm:relu_nice} via a similar argument. Take again $X=\cc(\vbx_0)$, $\epsilon = \frac{\tau\|\vbx_0\|_\spec}{\sqrt{d-2}}$, and let $f$ to be a random convolutional ReLU random network as in Theorem \ref{thm:relu_nice}. We have that $f$ is $\so(d)$-invariant (lemma \ref{lem:conv_nets_are_inv}). However, as opposed to random network with odd activations it is not immediate that $f$ is balanced. In Lemma \ref{lem:relu_balanced} below we do show that this is the case that $f$ is $(1-o_d(1),1/\log(d))$-balanced. Hence, Theorem \ref{thm:main} implies that w.p. $(1-o_d(1))\left(1 - 2e^{-\Omega(\tau^2/\log^2(d))}\right)$ either $f(\bx_0)=0$ or there is $\bx$ such that $d(\bx,\bx_0)\le  \frac{\tau\|\vbx_0\|_\spec}{\sqrt{d-2}}$ and $\sign(f(\bx))\ne \sign(f(\bx_0))$. This implies Theorem \ref{thm:relu_nice}.

\begin{proof} (of Theorem \ref{thm:main})
    Let $U\in\so(d)$ be a random matrix. We can assume w.l.o.g. that $f=Ug$ where $g$ is distributed as $f$ and independent of $U$. Now, by conditioning on $g$, and since w.p. $\ge q$ we have $\mu_X(g\ge 0) \ge p$ and $\mu_X(g\le 0) \ge p$, the Theorem follows from Lemma \ref{lem:main} below.  
\end{proof}

\begin{lemma}\label{lem:main}
    Let $X$ be transitive $L$-Lipschitz $\so(d)$-metric-space and let $\bx_0\in X$. Let $f:X\to\reals$ be a function such that $\mu_X(f\ge 0) \ge p$ and $\mu_X(f \le 0) \ge p$. Let $U\in\so(d)$ be a random matrix.
    Then, w.p. $1 - 2e^{-\frac{(d-2)\epsilon^2p^2}{8L^2}}$ either $Uf(\bx_0)=0$ or there is $\bx$ such that $d(\bx,\bx_0)\le \epsilon$ and $\sign(Uf(\bx))\ne \sign(Uf(\bx_0))$
\end{lemma}
\begin{proof}
Let $A^+ = \{\bx : f(\bx)>0\}$ and $A^- = \{\bx : f(\bx)<0\}$. By Theorem \ref{thm:iso_for_inv} and the fact that $\mu_X(A^+) \ge p$ and $\mu_X(A^-) \ge p$ we have
\[
\mu_X(A^+_\epsilon\cap A^-_\epsilon) \ge 1 -2 e^{-\frac{(d-2)\epsilon^2p^2}{8L^2}}
\]
Hence, w.p. $1 -2 e^{-\frac{(d-2)\epsilon^2p^2}{8L^2}}$ over the choice of $U$ we have that $U^{-1}\bx_0\in A^+_\epsilon\cap A^-_\epsilon$. It is enough to show that in this case either $Uf(\bx_0)=0$ or there is $\bx$ such that $d(\bx,\bx_0)\le \epsilon$ and $\sign(Uf(\bx))\ne \sign(Uf(\bx_0))$.

Indeed, suppose that $f(U^{-1}\bx_0)> 0$ (the case $f(U^{-1}\bx_0)< 0$ is similar and the case $f(U^{-1}\bx_0)= 0$ is immediate). Since $U^{-1}\bx_0\in  A^-_\epsilon$ there is $\by\in A^-$ such that $d(\by, U^{-1}\bx_0)$. Let $\bx = U\by$. We have
\[
d(\bx,\bx_0) = d(U^{-1}\bx,U^{-1}\bx_0)= d(U^{-1}U\by,U^{-1}\bx_0)= d(\by,U^{-1}\bx_0)\le\epsilon
\]
Likewise, 
\[
Uf(\bx) = f(U^{-1}\bx) =  f(U^{-1}U\by) = f(\by) \le 0
\]
Hence, $\sign(Uf(\bx))\ne \sign(Uf(\bx_0))$
\end{proof}

\section{Balance-ness of random ReLU networks}
In this section we will prove the following Lemma
\begin{lemma}\label{lem:relu_balanced}
    Fix a constant $l$.
    Let $f:(\reals^d)^n\to\reals$ be a random layered network with $l$ independent convolutional Xavier layers with ReLU activation, except the last layer which has linear activation.
    Assume that the number of channels in each layer, as well as $d$ are all $\omega(\log(nd))$. Fix $\vbx_0\in \left(R\cdot\sphere^{d-1}\right)^n$. Then, $f|_{\cc(\vbx_0)}$ is $(1-o_d(1),1/\log(d))$-balanced. 
\end{lemma}
\subsection{Proof of Lemma \ref{lem:relu_balanced}}
Since the ReLU is homogeneous, we can assume w.l.o.g. that $R=\sqrt{d}$.
Let $\vbx^1,\ldots,\vbx^m$ be i.i.d. uniform points in $X=\cc(\bx_0)$ with $m=\lfloor\sqrt{\log(d)}\rfloor$. We say that $f$ {\em separates} $\vbx^1,\ldots,\vbx^m$ if there are $1\le i,j\le m$ such that $f(\vbx^i)<0<f(\vbx^j)$. Let $A_m$ be the event that $f$ separates $\vbx^1,\ldots,\vbx^{m}$. We will show that 
\begin{lemma}\label{lem:core_of_relu_balanced}
$\Pr_{\vbx^1,\ldots,\vbx^m,f}(A_m) = 1-o_d(1)$.    
\end{lemma}
Before proving Lemma \ref{lem:core_of_relu_balanced}, we will explain how it implies Lemma \ref{lem:relu_balanced}. . We have
\begin{eqnarray*}
    \Pr_{\vbx^1,\ldots,\vbx^m,f}(A_m) &= & \Pr(f\text{ is $\beta$-balanced})\E_{f}\left[\Pr_{\vbx^1,\ldots,\vbx^m}(A_m)|f\text{ is $\beta$-balanced}\right]
    \\
    && + \Pr(f\text{ is not $\beta$-balanced})\E_{f}\left[\Pr_{\vbx^1,\ldots,\vbx^m}(A_m)|f\text{ is not $\beta$-balanced}\right]
    \\
    &\le & \Pr(f\text{ is $\beta$-balanced}) + \E_{f}\left[\Pr_{\vbx^1,\ldots,\vbx^m}(A_m)|f\text{ is not $\beta$-balanced}\right]
    \\
    &\le & \Pr(f\text{ is $\beta$-balanced}) + 1-(1-\beta)^m
\end{eqnarray*}
Taking $\beta = 1/m^2$ we conclude Lemma \ref{lem:relu_balanced} as
\[
\Pr(f\text{ is $1/\log(d)$-balanced}) \ge \Pr(f\text{ is $1/m^2$-balanced})\ge \Pr_{\vbx^1,\ldots,\vbx^m,f}(A_m) - 1  +  (1-1/m^2)^m = 1-o_d(1)
\]
Hence, it is enough to prove Lemma \ref{lem:core_of_relu_balanced}.
We have that $f=F_{W_{l}}\circ\ldots\circ F_{W_1}$ where $F_{W_1},\ldots,F_{W_{l}}$ are independent Xavier convolutional layers with ReLU activation in all layers, except the last one ($F_{W_{l}}$) whose activation is the identity function. Assume that $F_{W_v}:\left(\reals^{d_{v-1}}\right)^{n_{v-1}}\to \left(\reals^{d_{v}}\right)^{n_{v}}$ is a convolutional layer of width $w_v$, stride $s_v$ and $d_v$ channels. Note that since the range of $f$ is $\reals$ we have $n_l=d_l=1$.
Denote $\Psi = \sqrt{\frac{2^{l-1}}{n_{l-1}d_{l-1}}}F_{W_{l-1}}\circ\ldots\circ F_{W_1}$.
We will prove Lemma \ref{lem:core_of_relu_balanced} in three steps corresponding to the following three Lemmas
\begin{lemma}\label{lem:core_1}
    W.p. $1-o_d(1)$, for any $1\le i<j\le m$ and $1\le t\le n$, $\inner{\bx^i_t,\bx^j_t} \le d/2$.
\end{lemma}

\begin{lemma}\label{lem:core_2}
    Given that for any $1\le i<j\le m$ and $1\le t\le n$, $\inner{\bx^i_t,\bx^j_t} \le d/2$. We have w.p. $1-o_d(1)$ that for any $1\le i<j\le m$ it holds that $\|\Psi(\vbx^i)-\Psi(\vbx^j)\|\ge \beta$ and $\|\Psi(\vbx^i)\|\le 2$, where $\beta>0$ is a constant depending only on the depth $l$
\end{lemma}

\begin{lemma}\label{lem:core_3}
    Fix a constant $\beta>0$. Given that for any $1\le i<j\le m$ it holds that $\|\Psi(\vbx^i)-\Psi(\vbx^j)\|\ge \beta$ and $\|\Psi(\vbx^i)\|\le 2$ we have w.p. $1-o_d(1)$ that $f$ separates $\vbx^1,\ldots\vbx^m$ 
\end{lemma}
Lemmas \ref{lem:core_1}, \ref{lem:core_2} and \ref{lem:core_3} clearly implies Lemma \ref{lem:core_of_relu_balanced}, so it remains to prove them. Lemma \ref{lem:core_1} is a simple consequence of the fact that if $\bx,\by\in \sqrt{d}\sphere^{d-1}$ are uniform and independent then $\Pr(\inner{\bx,\by} \ge t)\le e^{-\frac{t^2}{2d}}$ (e.g. \cite{vershynin2018high} chapter 3). Hence, the probability that for some $1\le i<j\le m$ and $1\le t\le n$, $\inner{\bx^i_t,\bx^j_t} > d/2$ is at most $2\binom{m}{2}ne^{-\frac{d}{8}} = o_d(1)$ where the last inequity is correct as we assume that $d=\omega(\log(n))$.
It remains to prove Lemmas \ref{lem:core_2} and \ref{lem:core_3}, which we do next

\subsubsection{Proof of Lemma \ref{lem:core_2}}
In order to prove Lemma \ref{lem:core_2} we will use a result of \citet{daniely2016toward}, which shows that w.h.p. $\inner{\Psi(\vbx),\Psi(\vby)}\approx k(\vbx,\vby)$ where $k:\left(\sqrt{d}\cdot\sphere^{d-1}\right)^n\times \left(\sqrt{d}\cdot\sphere^{d-1}\right)^n\to\reals$ is a kernel function with an explicit formula, which we define next. Let
\[
\hat{\sigma}(u) = \frac{1}{\pi}\left(u(\pi-\arccos(u))+\sqrt{1-u^2}\right)
\]
We note that $\hat{\sigma}$ is non-negative and monotone in $[-1,1]$, and satisfies $\hat\sigma(1)=1$.
For $\vbx,\vby\in \left(\sqrt{d}\cdot\sphere^{d-1}\right)^n$, $0\le v\le l-1$ and $1\le t\le n_i$ we define recursively
\[
k_{0,t}(\vbx,\vby) = \frac{\inner{\bx_t,\by_t}}{d}\text{ and }k_{v,t}(\vbx,\vby) = \hat{\sigma}\left(\frac{1}{w_v}\sum_{r=1}^{w_v}k_{v-1,s_v(t-1)+r}(\vbx,\vby)\right)
\]
Finally, let $k(\vbx,\vby) = k_{l,1}(\vbx,\vby)$. The following Lemma shows that w.p. $1-o_d(1)$, for any $1\le i,j\le m$, $\inner{\Psi(\vbx^i),\Psi(\vbx^i)}=k(\vbx^i,\vbx^j)+o_d(1)$

\begin{lemma}\label{lem:emp_kernel}\cite{daniely2016toward}
    Suppose that for any $1\le v\le l-1$ we have $d_i \gtrsim \frac{l^2\log(ln/\delta)}{\epsilon^2}$ and that $\epsilon \lesssim \frac{1}{l}$. Then,
    \[
    \Pr\left(\left|k(\vbx,\vby) - \inner{\Psi(\vbx),\Psi(\vby)}\right| > \epsilon\right) <\delta
    \]
\end{lemma}
Next, we show by induction on $v$ that if for any $1\le i<j\le m$ and $1\le t\le n$, $\inner{\bx^i_t,\bx^j_t} \le d/2$ then 
$k_{v,t}(\vbx^i,\vbx^j)\le  \hat{\sigma}^{\circ v}(1/2) < 1$ for $i\ne j$ and $k_{v,t}(\vbx^i,\vbx^i)=1$. Indeed,
\begin{eqnarray*}
    k_{v,t}(\vbx^i,\vbx^j) &=& \hat{\sigma}\left(\frac{1}{w_v}\sum_{r=1}^{w_v}k_{v-1,s_v(t-1)+r}(\vbx^i,\vbx^j)\right)
    \\
    &\stackrel{\hat{\sigma}\text{ monotonicity and induction hypothesis}}{\le} & \hat{\sigma}(\hat{\sigma}^{\circ (v-1)}(1/2))
    \\
    &=& \hat{\sigma}^{\circ v}(1/2)
\end{eqnarray*}
and
\[
k_{v,t}(\vbx^i,\vbx^i) = \hat{\sigma}\left(\frac{1}{w_v}\sum_{r=1}^{w_v}k_{v-1,s_v(t-1)+r}(\vbx^i,\vbx^i)\right)
    \stackrel{\text{ induction hypothesis}}{=}  \hat{\sigma}(1)=1
\]
Hence, we have w.p. $1-o_d(1)$ that $\|\Psi(\vbx_i)\|^2 = 1 + o_d(1)$ and that
\begin{eqnarray*}
    \|\Psi(\vbx_i)-\Psi(\vbx_j)\|^2 = 1 - 2k(\vbx^i,\vbx^j) + 1 + o_d(1) \ge 2(1-\hat{\sigma}^{\circ (l-1)}(1/2)) + o_d(1)
\end{eqnarray*}
which proves Lemma \ref{lem:core_2}

\subsubsection{Proof of Lemma \ref{lem:core_3}}

\begin{theorem}[Sudakov e.g. \cite{van2014probability} chapter 6.1]\label{lem:sudakov}
    Let $\bx_1,\ldots,\bx_m\in \reals^d$ be vectors such that $\|\bx_i-\bx_j\|\ge\alpha$ for any $1\le i<j\le m$. Let $\bw\in\reals^d$ be standard Gaussian. Then, $\E \max_i \inner{\bw,\bx_i} \gtrsim \alpha\sqrt{\log(m)}$ 
\end{theorem}
Let $Z = \max_{1\le i\le m}f(\vbx^i)$. Given that for any $1\le i<j\le m$ it holds that $\|\Psi(\vbx^i)-\Psi(\vbx^j)\|\ge \beta$ and $\|\Psi(\vbx^i)\|\le 2$ we have that $\E Z = \omega(1)$ by Sudakov Lemma. Since $W_l\mapsto \max_{i}\inner{W_l,\sqrt{d_{l-1}n_{l-1}}\Psi(\bx_i)}$ is $2$-Lipchitz, and $W_l$ is a matrix with i.i.d. centered Gaussians with variance $\frac{1}{d_{l-1}n_{l-1}}$,
Gaussian concentration (e.g. \cite{vershynin2018high} section 5.2.1) implies that $\var(Z) \le 4$. Hence, w.p. $1-o_d(1)$, $Z>0$, implying that there is $i$ such that $f(\vbx^i)>0$. Similarly, w.p. $1-o_d(1)$ there is also $i$ such that $f(\vbx^i)<0$. This proves Lemma \ref{lem:core_3}

\section*{Acknowledgments}
The research described in this paper was funded by the European Research Council (ERC) under the European Union’s Horizon 2022 research and innovation program (grant agreement No. 101041711), the Israel Science Foundation (grant number 2258/19), and the Simons Foundation (as part of the Collaboration on the Mathematical and Scientific Foundations of Deep Learning).

\bibliography{bib}

\begin{thebibliography}{28}
\providecommand{\natexlab}[1]{#1}
\providecommand{\url}[1]{\texttt{#1}}
\expandafter\ifx\csname urlstyle\endcsname\relax
  \providecommand{\doi}[1]{doi: #1}\else
  \providecommand{\doi}{doi: \begingroup \urlstyle{rm}\Url}\fi

\bibitem[Athalye et~al.(2018)Athalye, Carlini, and Wagner]{athalye2018obfuscated}
Anish Athalye, Nicholas Carlini, and David Wagner.
\newblock Obfuscated gradients give a false sense of security: Circumventing defenses to adversarial examples.
\newblock \emph{arXiv preprint arXiv:1802.00420}, 2018.

\bibitem[Bartlett et~al.(2021)Bartlett, Bubeck, and Cherapanamjeri]{bartlett2021adversarial}
Peter Bartlett, S{\'e}bastien Bubeck, and Yeshwanth Cherapanamjeri.
\newblock Adversarial examples in multi-layer random relu networks.
\newblock \emph{Advances in Neural Information Processing Systems}, 34:\penalty0 9241--9252, 2021.

\bibitem[Bubeck et~al.(2019)Bubeck, Lee, Price, and Razenshteyn]{bubeck2019adversarial}
Sebastien Bubeck, Yin~Tat Lee, Eric Price, and Ilya Razenshteyn.
\newblock Adversarial examples from computational constraints.
\newblock In \emph{International Conference on Machine Learning}, pages 831--840, 2019.

\bibitem[Bubeck et~al.(2021)Bubeck, Cherapanamjeri, Gidel, and Tachet~des Combes]{bubeck2021single}
S{\'e}bastien Bubeck, Yeshwanth Cherapanamjeri, Gauthier Gidel, and Remi Tachet~des Combes.
\newblock A single gradient step finds adversarial examples on random two-layers neural networks.
\newblock \emph{Advances in Neural Information Processing Systems}, 34:\penalty0 10081--10091, 2021.

\bibitem[Carlini and Wagner(2017)]{carlini2017adversarial}
Nicholas Carlini and David Wagner.
\newblock Adversarial examples are not easily detected: Bypassing ten detection methods.
\newblock In \emph{Proceedings of the 10th ACM Workshop on Artificial Intelligence and Security}, pages 3--14, 2017.

\bibitem[Carlini and Wagner(2018)]{carlini2018audio}
Nicholas Carlini and David Wagner.
\newblock Audio adversarial examples: Targeted attacks on speech-to-text.
\newblock In \emph{2018 IEEE Security and Privacy Workshops (SPW)}, pages 1--7. IEEE, 2018.

\bibitem[Daniely and Shacham(2020)]{daniely2020most}
Amit Daniely and Hadas Shacham.
\newblock Most relu networks suffer from $\ell^2$ adversarial perturbations.
\newblock \emph{Advances in Neural Information Processing Systems}, 33:\penalty0 6629--6636, 2020.

\bibitem[Daniely et~al.(2016)Daniely, Frostig, and Singer]{daniely2016toward}
Amit Daniely, Roy Frostig, and Yoram Singer.
\newblock Toward deeper understanding of neural networks: The power of initialization and a dual view on expressivity.
\newblock \emph{Advances in neural information processing systems}, 29, 2016.

\bibitem[Fawzi et~al.(2018)Fawzi, Fawzi, and Fawzi]{fawzi2018adversarial}
Alhussein Fawzi, Hamza Fawzi, and Omar Fawzi.
\newblock Adversarial vulnerability for any classifier.
\newblock In \emph{Advances in Neural Information Processing Systems}, pages 1178--1187, 2018.

\bibitem[Feinman et~al.(2017)Feinman, Curtin, Shintre, and Gardner]{feinman2017detecting}
Reuben Feinman, Ryan~R Curtin, Saurabh Shintre, and Andrew~B Gardner.
\newblock Detecting adversarial samples from artifacts.
\newblock \emph{arXiv preprint arXiv:1703.00410}, 2017.

\bibitem[Glorot and Bengio(2010)]{glorot2010understanding}
Xavier Glorot and Yoshua Bengio.
\newblock Understanding the difficulty of training deep feedforward neural networks.
\newblock In \emph{Proceedings of the thirteenth international conference on artificial intelligence and statistics}, pages 249--256. JMLR Workshop and Conference Proceedings, 2010.

\bibitem[Goodfellow et~al.(2014)Goodfellow, Shlens, and Szegedy]{goodfellow2014explaining}
Ian~J Goodfellow, Jonathon Shlens, and Christian Szegedy.
\newblock Explaining and harnessing adversarial examples.
\newblock \emph{arXiv preprint arXiv:1412.6572}, 2014.

\bibitem[Grosse et~al.(2017)Grosse, Manoharan, Papernot, Backes, and McDaniel]{grosse2017statistical}
Kathrin Grosse, Praveen Manoharan, Nicolas Papernot, Michael Backes, and Patrick McDaniel.
\newblock On the (statistical) detection of adversarial examples.
\newblock \emph{arXiv preprint arXiv:1702.06280}, 2017.

\bibitem[Madry et~al.(2017)Madry, Makelov, Schmidt, Tsipras, and Vladu]{madry2017towards}
Aleksander Madry, Aleksandar Makelov, Ludwig Schmidt, Dimitris Tsipras, and Adrian Vladu.
\newblock Towards deep learning models resistant to adversarial attacks.
\newblock \emph{arXiv preprint arXiv:1706.06083}, 2017.

\bibitem[Mahloujifar et~al.(2019)Mahloujifar, Diochnos, and Mahmoody]{mahloujifar2019curse}
Saeed Mahloujifar, Dimitrios~I Diochnos, and Mohammad Mahmoody.
\newblock The curse of concentration in robust learning: Evasion and poisoning attacks from concentration of measure.
\newblock In \emph{Proceedings of the AAAI Conference on Artificial Intelligence}, volume~33, pages 4536--4543, 2019.

\bibitem[Meckes(2019)]{meckes2019random}
Elizabeth~S Meckes.
\newblock \emph{The random matrix theory of the classical compact groups}, volume 218.
\newblock Cambridge University Press, 2019.

\bibitem[Melamed et~al.(2023)Melamed, Yehudai, and Vardi]{melamed2023adversarial}
Odelia Melamed, Gilad Yehudai, and Gal Vardi.
\newblock Adversarial examples exist in two-layer relu networks for low dimensional linear subspaces.
\newblock \emph{Advances in Neural Information Processing Systems}, 36:\penalty0 5028--5049, 2023.

\bibitem[Montanari and Wu(2023)]{montanari2023adversarial}
Andrea Montanari and Yuchen Wu.
\newblock Adversarial examples in random neural networks with general activations.
\newblock \emph{Mathematical Statistics and Learning}, 6\penalty0 (1):\penalty0 143--200, 2023.

\bibitem[Papernot et~al.(2016)Papernot, McDaniel, Wu, Jha, and Swami]{papernot2016distillation}
Nicolas Papernot, Patrick McDaniel, Xi~Wu, Somesh Jha, and Ananthram Swami.
\newblock Distillation as a defense to adversarial perturbations against deep neural networks.
\newblock In \emph{2016 IEEE Symposium on Security and Privacy (SP)}, pages 582--597. IEEE, 2016.

\bibitem[Papernot et~al.(2017)Papernot, McDaniel, Goodfellow, Jha, Celik, and Swami]{papernot2017practical}
Nicolas Papernot, Patrick McDaniel, Ian Goodfellow, Somesh Jha, Z~Berkay Celik, and Ananthram Swami.
\newblock Practical black-box attacks against machine learning.
\newblock In \emph{Proceedings of the 2017 ACM on Asia conference on computer and communications security}, pages 506--519, 2017.

\bibitem[Schmidt et~al.(2018)Schmidt, Santurkar, Tsipras, Talwar, and Madry]{schmidt2018adversarially}
Ludwig Schmidt, Shibani Santurkar, Dimitris Tsipras, Kunal Talwar, and Aleksander Madry.
\newblock Adversarially robust generalization requires more data.
\newblock In \emph{Advances in Neural Information Processing Systems}, pages 5014--5026, 2018.

\bibitem[Shafahi et~al.(2018)Shafahi, Huang, Studer, Feizi, and Goldstein]{shafahi2018adversarial}
Ali Shafahi, W~Ronny Huang, Christoph Studer, Soheil Feizi, and Tom Goldstein.
\newblock Are adversarial examples inevitable?
\newblock \emph{arXiv preprint arXiv:1809.02104}, 2018.

\bibitem[Shamir et~al.(2019)Shamir, Safran, Ronen, and Dunkelman]{shamir2019simple}
Adi Shamir, Itay Safran, Eyal Ronen, and Orr Dunkelman.
\newblock A simple explanation for the existence of adversarial examples with small hamming distance.
\newblock \emph{arXiv preprint arXiv:1901.10861}, 2019.

\bibitem[Szegedy et~al.(2014)Szegedy, Zaremba, Sutskever, Estrach, Erhan, Goodfellow, and Fergus]{szegedy2014intriguing}
Christian Szegedy, Wojciech Zaremba, Ilya Sutskever, Joan~Bruna Estrach, Dumitru Erhan, Ian Goodfellow, and Robert Fergus.
\newblock Intriguing properties of neural networks.
\newblock In \emph{2nd International Conference on Learning Representations, ICLR 2014}, 2014.

\bibitem[Van~Handel(2014)]{van2014probability}
Ramon Van~Handel.
\newblock Probability in high dimension.
\newblock \emph{Lecture Notes (Princeton University)}, 2\penalty0 (3):\penalty0 2--3, 2014.

\bibitem[Vardi et~al.(2022)Vardi, Yehudai, and Shamir]{vardi2022gradient}
Gal Vardi, Gilad Yehudai, and Ohad Shamir.
\newblock Gradient methods provably converge to non-robust networks.
\newblock \emph{Advances in Neural Information Processing Systems}, 35:\penalty0 20921--20932, 2022.

\bibitem[Vershynin(2018)]{vershynin2018high}
Roman Vershynin.
\newblock \emph{High-dimensional probability: An introduction with applications in data science}, volume~47.
\newblock Cambridge university press, 2018.

\bibitem[Wong and Kolter(2018)]{wong2018provable}
Eric Wong and Zico Kolter.
\newblock Provable defenses against adversarial examples via the convex outer adversarial polytope.
\newblock In \emph{International Conference on Machine Learning}, pages 5286--5295, 2018.

\end{thebibliography}

\ifdraft

\newpage

\section{Spectrum of a random convolutional layer}
Let $F_W:(\reals^d)^m\to (\reals^{d'})^m$ be a convolutional layer with stride $1$. That is
\[
F_W(\vbx) = \left(WT_{1,w}(\vbx),\ldots,WT_{m,w}(\vbx)\right)
\]
Let 
\[
Z^{m,d}_k = \left\{\left(\bx,e^{\frac{2\pi k}{m}}\bx,\ldots,e^{\frac{2\pi k(m-1)}{m}}\bx\right) : \bx\in\reals^d\right\}
\]
We have that $Z^{m,d}_k$ is $F_W$ invariant indeed
\[
S_j\left(\bx,e^{\frac{2\pi k}{m}}\bx,\ldots,e^{\frac{2\pi k(m-1)}{m}}\bx\right) = e^{\frac{2\pi kj}{m}}\cdot\left(\bx,e^{\frac{2\pi k}{m}}\bx,\ldots,e^{\frac{2\pi k(m-1)}{m}}\bx\right)
\]
Hence,
\[
F_W(\vbx) = \sum_{j=0}^{w-1} F_{W_j}(S_{j}(\vbx)) =\sum_{j=0}^{w-1} e^{\frac{2\pi kj}{m}}F_{W_j}(\vbx) = F_{\sum_{j=0}^{w-1} e^{\frac{2\pi kj}{m}}W_j}(\vbx)
\]
Now, w.h.p. $\|\sum_{j=0}^{w-1} e^{\frac{2\pi kj}{m}}W_j\|_\spec = O(1)$ for any $0\le k\le m-1$

\section{Finding Adversarial Examples}
Let $f:\reals^d\to\reals$ be twice differentiable function. Let $\bx',\bx$ be two points such that $\sign(f(\bx)\ne \sign(f(\bx')$. W.l.o.g. assume that $f(\bx) > 0 \ge f(\bx')$. we note that this implies that there is a point $\xi\in [\bx,\bx']$ such that $\inner{\nabla f(\xi),\bx-\bx'}\ge f(\bx)$. Hence,
\[
\|\nabla f(\bx)\| \ge \|\nabla f(\xi)\| - \|\nabla f(\bx)-\nabla f(\xi)\| \ge \frac{f(\bx)}{\|\bx-\bx'\|} - \beta \|\bx-\bx'\| \ge \sqrt{d}\frac{f(\bx)}{\|\bx\|} - \beta/(\sqrt{d} \|\bx\|)
\]
This implies that $g(t) = f(\bx+t\nabla f(\bx))$ has derivative at least $\frac{d f^2(\bx)}{\|\bx\|^2}$ and second derivative at most $\frac{\|\nabla f(\bx)\|}{\sqrt{d}}$. Hence, as long as $f(\bx)$ is not too small, there is an adversarial example in ???

\[
\left(\sqrt{\Sigma_{11}},0\right)\text{ and }\left(\frac{\Sigma_{21}}{\sqrt{\Sigma_{11}}},\sqrt{\Sigma_{22}-\frac{\Sigma^2_{21}}{\Sigma_{11}}}\right) 
\]
or
\[
\left(1,1\right)\text{ and }\left(a,\sqrt{2-a}\right) 
\]
\[
a+\sqrt{2-a}=\rho\Leftrightarrow -(2-a) + \sqrt{2-a}= \rho-2\Leftrightarrow
\]
\[
f(x)=h(g(x))\; f'(x) = h'(g(x))g'(x)\;f''(x) =  h''(g(x))(g'(x))^2 + h'(g(x))g''(x)
\]
\[
f(\bx) = g(\sigma(\bx))\;
\]
\[
\frac{\partial f}{\partial x_i} (\bx) =  \frac{\partial g}{\partial x_i}(\sigma(\bx))\sigma'(x_i)
\]
\[
\frac{\partial^2 f}{\partial x_j\partial x_i} (\bx) =  \frac{\partial^2 g}{\partial x_j\partial x_i}(\sigma(\bx))\sigma'(x_j)\sigma'(x_i) + \delta_{ij}\frac{\partial g}{\partial x_i}(\sigma(\bx))\sigma''(x_i)
\]
\[
H_f = \diag(\sigma'(\bx)) H_g(\sigma(\bx))\diag(\sigma'(\bx)) +  \diag(\nabla f(\sigma(\bx)))\diag(\sigma''(\bx))
\]

\section{Dropped Lemmas}
We will also use the following lemma, which provides better lower bounds on $\mu(A_\epsilon)$ for in cases what $\mu(A)\ll1$. 
\begin{lemma}
    Let $C = 2-\frac{2}{e}>1$ and let $\epsilon_d = \sqrt{\frac{200}{d-2}}$. For $A\subseteq \so(d)$ and $\epsilon>0$ let $k=\left\lceil \log_{C}(1/\mu(A))\right\rceil$ we have
    \[
    \mu(A_{k\epsilon_d + \epsilon}) \ge 1 - e^{-\frac{(d-2)\epsilon^2}{32}}
    \]
\end{lemma}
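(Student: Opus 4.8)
The plan is to bypass the isoperimetric Corollary entirely — its $(\mu(A))^2$ factor is precisely what makes it lossy when $\mu(A)$ is small — and instead invoke the underlying measure-concentration theorem for $\so(d)$ twice, both times for the single $1$-Lipschitz function $g(U)=d(A,U)$. Concretely: first use concentration (in its lower-tail form) to show that $\E_\mu g$ is small, and then use concentration (in its upper-tail form) to show that $g$ exceeds $k\epsilon_d+\epsilon$ with probability at most $e^{-(d-2)\epsilon^2/32}$. Throughout we may assume $\mu(A)>0$ (otherwise $k$ is undefined); note that $g$ is $1$-Lipschitz with respect to the Frobenius metric and that $g\equiv 0$ on $A$, so $\mu(g=0)\ge\mu(A)$.

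\emph{Step 1: bounding $\E_\mu g$.} Applying the concentration theorem to the $1$-Lipschitz function $-g$ gives the lower-tail estimate $\mu\left(g\le \E_\mu g-t\right)\le e^{-(d-2)t^2/8}$ for all $t>0$. Taking $t=\E_\mu g$ (we may assume $\E_\mu g>0$, since otherwise $g=0$ $\mu$-a.e., whence $A_\delta$ has full measure for every $\delta>0$ and the claim is trivial) and using $\mu(g=0)\ge\mu(A)$ yields $\mu(A)\le e^{-(d-2)(\E_\mu g)^2/8}$, i.e.
\[
\E_\mu g\;\le\;\sqrt{\frac{8\ln(1/\mu(A))}{d-2}}.
\]

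\emph{Step 2: this is at most $k\epsilon_d$.} Write $s=\ln(1/\mu(A))\ge 0$. Cancelling the common factor $\sqrt{1/(d-2)}$, the desired inequality $\sqrt{8s/(d-2)}\le k\sqrt{200/(d-2)}$ is just $\sqrt{s}\le 5k$. Since $C=2-2/e<e$ we have $\ln C<1$, hence $k=\lceil\log_C(1/\mu(A))\rceil=\lceil s/\ln C\rceil\ge\max(1,s)$ when $s>0$ (and both sides vanish when $s=0$), while $\sqrt s\le 5\max(1,s)$ for every $s\ge 0$ (check $s\le 1$ and $s>1$ separately). Hence $\E_\mu g\le k\epsilon_d$. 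The constants $2-2/e$ and $200$ are far from optimal; there is a large margin here.

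\emph{Step 3: conclusion.} By the concentration theorem applied to $g$ itself, $\mu\left(g\ge\E_\mu g+\epsilon\right)\le e^{-(d-2)\epsilon^2/8}$. Since $\E_\mu g\le k\epsilon_d$, the complement of $A_{k\epsilon_d+\epsilon}=\{U:g(U)\le k\epsilon_d+\epsilon\}$ is contained in $\{g\ge\E_\mu g+\epsilon\}$, so
\[
\mu\left(A_{k\epsilon_d+\epsilon}\right)\;\ge\;1-e^{-(d-2)\epsilon^2/8}\;\ge\;1-e^{-(d-2)\epsilon^2/32},
\]
as claimed. The only part that is not purely mechanical is Step 1 — one must pass to $-g$ to obtain the lower tail, since the concentration theorem as quoted gives only the upper tail — so I would flag that as the ``main obstacle'', though it is mild. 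One could alternatively iterate the Corollary on the complements $\bigl(A_{j\epsilon_d}\bigr)^\complement$ for $j=0,1,\dots$, which is presumably where the particular value $C=2-2/e$ originates, but that route is messier and buys nothing.
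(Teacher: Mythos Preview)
Your proof is correct and in fact yields the stronger exponent $8$ in place of $32$. It is, however, a genuinely different argument from the paper's.

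The paper proceeds by iteration, which is indeed where the constants $C=2-2/e$ and $\epsilon_d=\sqrt{200/(d-2)}$ come from. Given $A$ with $\mu(A)\le 1/4$, it first observes (by a random-translate averaging argument) that there exist $m\le 1/(2\mu(A))$ left translates $A^1,\dots,A^m$ of $A$ whose union $B$ has $\mu(B)\ge 1/5$; applying the isoperimetric Corollary to $B$ and using $B_{\epsilon_d}=\bigcup_i A^i_{\epsilon_d}$ together with translation invariance gives $\mu(A_{\epsilon_d})/\mu(A)\ge 2\mu(B_{\epsilon_d})\ge 2-2e^{-(d-2)\epsilon_d^2/200}=2-2/e=C$. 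Iterating $k=\lceil\log_C(1/\mu(A))\rceil$ times drives $\mu(A_{k\epsilon_d})$ up to at least $1/4$, after which one final application of the Corollary (now with the harmless factor $(1/4)^2$) handles the remaining $\epsilon$.

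Your route bypasses all of this: two applications of the concentration theorem to the single Lipschitz function $g(U)=d(A,U)$ give first $\E_\mu g\le\sqrt{8\ln(1/\mu(A))/(d-2)}$ and then the conclusion directly. This is shorter, avoids the translate trick and the bootstrapping, and exposes the constants $C$ and $200$ as artifacts of the iterative scheme rather than anything intrinsic --- as you yourself remark. The paper's approach has the minor conceptual advantage of staying within the ``isoperimetric'' framework (enlarge, gain a fixed factor, repeat), but your argument is the cleaner one.
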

\begin{proof}
Let \todo{polish this proof} $A\subseteq G$ with $\mu(A)\le \frac{1}{4}$. There are $m \le \frac{1}{2\mu(A)}$ translation $A^1,\ldots,A^m$ of $A$ whose union has measure $\ge \frac{1}{5}$. 
Indeed, the probability that $\bg$ is in the union of $\left\lfloor \frac{1}{2\mu(A)} \right\rfloor$ random translation is
\[
1 - (1-\mu(A))^m \ge  1 -e^{-\mu(A)m} \ge 1 - e^{-\frac{1}{4}} \ge \frac{1}{5}
\]

\[
\mu(A) \left\lfloor \frac{1}{2\mu(A)} \right\rfloor \ge \mu(A)
\left( \frac{1}{2\mu(A)} -1 \right)
= \frac{1}{2}-\mu(A)\ge \frac{1}{4}
\]
Denote $B = \cup_{i=1}^m A^i$. We have $\cup_{i=1}^m A_\epsilon^i = B_\epsilon$
Now,
\[
\frac{\mu(A_\epsilon)}{\mu(A)} \ge \frac{\mu(B_\epsilon)}{m\mu(A)} \ge 2\mu(B_\epsilon) \ge 2 - 2e^{-\frac{(d-2)\epsilon^2(\mu(B))^2}{8}} \ge 2-2e^{-\frac{(d-2)\epsilon^2(1/5)^2}{8}} = 2-2e^{-\frac{(d-2)\epsilon^2}{200}}
\]
for $\epsilon_d = \sqrt{\frac{200}{d-2}}$ we get
\[
\frac{\mu(A_{\epsilon_d})}{\mu(A)} \ge 2- \frac{2}{e}
\]
Hence, $\mu(A_{k\epsilon_d})\ge \min\left(\frac{1}{4}, \left(2- \frac{2}{e}\right)^k\mu(A)\right)$.
Denote $C:=2- \frac{2}{e}$. Note that $C>1$. If $k = \left\lceil \log_{C}(1/\mu(A))\right\rceil$ then
\[
\mu(A_{k\epsilon_d})\ge \frac{1}{4}
\]
\end{proof}

\fi

\end{document}

\section{Introduction}
Adversarial examples, first observed by~\citet{szegedy2014intriguing},  were studied extensively in recent years, with several attacks (e.g. ~\cite{athalye2018obfuscated, carlini2017adversarial, carlini2018audio, goodfellow2014explaining, grosse2017statistical}) and defense methods (e.g. \cite{papernot2016distillation, papernot2017practical, madry2017towards, wong2018provable, feinman2017detecting}) being developed, as well as various attempts to explain why they exists (e.g. \cite{fawzi2018adversarial, shafahi2018adversarial, shamir2019simple, schmidt2018adversarially, bubeck2019adversarial}).
In particular, one line of work aims at explaining the phenomenon of adversarial examples by proving that they exists and can be found in random networks. \citet{daniely2020most} show that adversarial examples exists in random constant depth fully connected ReLU networks in which each layer reduces the width. Moreover, they showed that gradient flow, as well as gradient decent with sufficiently small step size, will find these adversarial examples. \citet{bartlett2021adversarial} following \citet{bubeck2021single} improved on \citet{daniely2020most} as they replaced the assumption that the dimension decreases with a very mild assumption that there is no exponential gap between the width of different layers. They also showed that gradient decent with a single step will find an adversarial example. \citet{montanari2023adversarial} further improved these results, as they completely dropped the width requirement.

We continue this line of work. Our contribution is twofold. First, we extend the family of architectures for which these results are applicable. We show the existence of adversarial examples in random {\em convolutional} ReLU networks of any constant depth, with no restriction on the width. For odd activations (such as sigmoids like $\sigma(x) = \tan^{-1}(x)$ and $\sigma(x) = \frac{e^x-e^{-x}}{e^x+e^{-x}}$) we show that adversarial examples exists for a broader class of architectures: We show that adversarial examples exists provided that the first layer is convolutional. This result is valid for any layered architecture in the remaining layers, with no restriction on the width nor on the depth. Our second contribution is that we substantially simplify the proofs of such results, and show that the existence of adversarial examples is a relatively simple consequence of the isoperimetric inequality on the special orthogonal group $\so(d)$. On the flip side as opposed to previous papers in this line of research, our techniques are less constructive, and we do not present an algorithm for finding an adversarial perturbation.

\subsection{Related Work}

Several recent theoretical studies have explored the fundamental reasons behind the existence of adversarial examples in machine learning. \citet{schmidt2018adversarially} demonstrate that training adversarially robust classifiers can require a significantly larger sample complexity compared to standard training, while \citet{bubeck2019adversarial} highlight scenarios where adversarially robust training is computationally more demanding.

\citet{fawzi2018adversarial, mahloujifar2019curse} leverage concentration of measure results to show that for various subsets of $\reals^d$,
including the sphere, ball, or cube, any partition of the space into a small number of subsets with non-negligible measure (with respect to the uniform distribution) will necessarily lead to an abundance of adversarial examples. In other words, most points will have a nearby example that belongs to a different subset of the partition, implying that \emph{any} classifier implementing such a partition will be susceptible to adversarial examples. \citet{shafahi2018adversarial} extend these findings to classification tasks where examples are generated by specific generative models.

Further, \citet{vardi2022gradient} and \citet{melamed2023adversarial} establish the existence of adversarial examples in trained depth-two neural networks. Lastly, \citet{shamir2019simple} investigate adversarial vulnerability with respect to the $\ell^0$ norm.

\section{Setting}

\subsubsection*{Notation}
We will use $M_{d\times n}$ to denote the space if $d\times n$ matrices.
We will use the standard Euclidean/Frobenius norm on the spaces $\reals^d$, $\left(\reals^d\right)^n$ and $M_{d\times n}$. We will denote the spectral norm of $A\in M_{d\times n}$ by $\|A\|_\spec$. Similarly, for a sequence $\vbx=(\bx_1,\ldots,\bx_n)\in \left(\reals^d\right)^n$ of $n$ vectors in $\reals^d$
we will denote by $\|\vbx\|_\spec$ the spectral norm of the $d\times n$ matrix whose $i$'th column is $\bx_i$. For a function $\sigma:\reals\to\reals$ and $\bx\in\reals^d$ we  denote by $\sigma(\bx)$ the vector $(\sigma(x_1),\ldots,\sigma(x_d))$. We will use $\gtrsim$ for denoting inequality up to a multiplicative constant.

\subsubsection*{Convolutional Networks}
A {\em layer} is a function $F:(\reals^{d_1})^n\to\reals^{d_2}$ of the form $F(\bx) = \sigma(W\bx)$ for a $d_2\times (d_1n)$ matrix $W$ and $\sigma:\reals\to\reals$ (that is called the {\em activation function} of the layer). A layered network is a composition of several layers.
A layer is {\em convolutional of width $w$, stride $s$ and $d_1$ channels}, for $w\le n$ such that $s$ divides $n-w$, if it is of the form $F_W(\vbx) = (\sigma(WT_{0,s,w}(\vbx)),\ldots,\sigma(WT_{\frac{n-w}{s},s,w}(\vbx)))$ where $T_{i,s,w}(\vbx) = (\bx^{is+1},\ldots,\bx^{is+w})$ and $W$ is a  $d_2\times (d_1w)$ matrix. We note that for the sake of simplicity, we consider one-dimensional convolutions, despite that our results can be phrased for general convolutional layers. We also note that the definition of convolutional layer encompasses fully connected layers as well, by taking the width $w$ to be $n$.

\subsubsection*{Random Convolutional Networks}
A {\em random convolutional layer} is a random function $F_W$ where $W$ is a random $d_2\times (d_1w)$ matrix.  We say that $W$  is {\em regular}, if for any orthogonal $U\in M_{(d_1w),(d_1w)}$ the distribution of $W$ and $WU$ are identical. 
We say that $W$ is {\em Xavier} \cite{glorot2010understanding} if its entries are i.i.d. centered Gaussians with variance $\frac{1}{d_1w}$ (note that a Xavier matrix is necessarily regular).
We say that $F_W$ is {\em regular/Xavier} if $W$ is a regular/Xavier random matrix.
We note that it is common to initialize neural networks with
regular random matrices. For instance Xavier matrices and random orthogonal matrices are standard choices for initial weights.

\section{Main Results}

Our first result assumes that the activation functions is odd (that is, satisfy $\sigma(-x) = -\sigma(x)$). Examples to such activations are sigmoids like $\sigma(x) = \tan^{-1}(x)$ and $\sigma(x) = \frac{e^x-e^{-x}}{e^x+e^{-x}}$. Under this assumption, we can show that adversarial examples exist in a quite general setting. That is, we show that adversarial examples exist if the random network is a regular random convolutional layer, followed by {\em any} layered network (with no restriction on the width, depth, etc.).

\begin{theorem}\label{thm:odd}
    Let $f:\left(\reals^d\right)^n\to\reals$ be a random layered network whose first layer is a regular random convolutional layer that is independent from the remaining layers. Assume that the activations in all layers are odd and that $d$ is even. Fix $\vbx_0\in \left(\reals^d\right)^n$. Then, w.p. $1-2e^{-\frac{\tau^2}{32}}$ over the choice of $f$ either $f(\vbx_0)=0$ or there is $\vbx$ such that $\|\vbx-\vbx_0\| \le \left(\frac{\tau }{\sqrt{d-2}}\cdot\frac{\|\vbx_0\|_\spec}{\|\vbx_0\|}\right)\cdot \|\vbx_0\|$ and $\sign(f(\vbx))\ne \sign(f(\vbx_0))$
\end{theorem}
Two remarks are in order. The first is that for any $\vbx_0\in \left(\reals^d\right)^n$ we have $\frac{1}{\sqrt{\min(d,n)}}\le \frac{\|\bx_0\|_\spec}{\|\bx_0\|}\le 1$, and that for ``typical" (e.g. random) $\vbx_0$ we have $ \frac{\|\bx_0\|_\spec}{\|\bx_0\|}\approx   \frac{1}{\sqrt{\min(d,n)}}$. Thus, theorem \ref{thm:odd} guarantee that for any $\vbx_0$, w.h.p. there is an adversarial example $\vbx$ with $\|\vbx-\vbx_0\| \le\frac{1}{\sqrt{d}}\|\vbx_0\|$, and that for a ``typical" $\vbx_0$ there is an adversarial example $\vbx$ with $\|\vbx-\vbx_0\| \le\frac{1}{\sqrt{\min(d^2,dn)}}\|\vbx_0\|$. The second remark is about the possibility that $f(\vbx_0)=0$. We note that for a ``reasonable" model of random  networks, this probability is $o_d(1)$ or even $0$. For such models, the conclusion of theorem \ref{thm:odd} implies that w.p. $(1-o_d(1)\left(1-2e^{-\frac{\tau^2}{32}}\right)$ over the choice of $f$ there is $\vbx$ such that $\|\vbx-\vbx_0\| \le \left(\frac{\tau }{\sqrt{d-2}}\cdot\frac{\|\vbx_0\|_\spec}{\|\vbx_0\|}\right)\cdot \|\vbx_0\|$ and $\sign(f(\vbx))\ne \sign(f(\vbx_0))$.

Our second result considers constant depth random convolutional networks with ReLU activation.

\begin{theorem}\label{thm:relu_nice}
    Let $f:(\reals^d)^n\to\reals$ be a random layered network with $l$ independent convolutional Xavier layers with ReLU activation, except the last layer which has linear activation.
    Assume that the number of channels in each layer, as well as $d$ are all $\omega(\log(nd))$. Fix $\vbx_0\in \left(R\cdot\sphere^{d-1}\right)^n$. Then, w.p. $(1-o_d(1))\left(1-2e^{-\Omega(\tau^2/\log^2(d))}\right)$ over the choice of $f$, either $f(\bx_0)=0$ or there is $\vbx$ such that $\|\vbx-\vbx_0\| \le \left(\frac{\tau }{\sqrt{d-2}}\cdot\frac{\|\vbx_0\|_\spec}{\|\vbx_0\|}\right)\cdot \|\vbx_0\|$ and $\sign(f(\vbx))\ne \sign(f(\vbx_0))$. The asymptotic notations depend only on the depth $l$.
\end{theorem}

\section{Isoperimetric inequalities for $\so(d)$ and $\so(d)$-metric-spaces}
Let $\so(d)$ the special orthogonal group. That is, $\so(d)$ is the group of orthogonal matrices with determinant $1$. 
We will view $\so(d)$ as a metric space w.r.t. the metric induced by the Frobenius norm. We note that this metric is $\so(d)$-invariant in the sense that $d(UV_1,UV_2) = d(V_1,V_2)$ for any $U,V_1,V_2\in\so(d)$.
Let $\mu$ be the uniform (Haar) probability measure on $\so(d)$. We note that $\mu$ is the unique probability measure on $\so(d)$ that is $\so(d)$-invariant. That is, it is the unique probability measure on $\so(d)$ that satisfies $\mu(UA) = \mu(A) = \mu(AU)$ for any measurable $A\subseteq\so(d)$ and any $U\in \so(d)$.

The results in this paper will make a crucial use of isoperimetric inequalities on $\so(d)$, which we introduce next. To this end, for $A\subseteq \so(d)$, $U\in \so(d)$ and $\epsilon>0$ we let
\[
d(A,U) = \min_{V\in A}\|V-U\|
\]
and
\[
A_\epsilon = \left\{ U\in \so(d) : d(A,U)\le \epsilon \right\}
\]
Isoperimetric inequalities shows that for relatively small $\epsilon>0$, $\mu(A_\epsilon)\gg\mu(A)$. These inequalities are based on the concentration of measure property on $\so(d)$.

\begin{theorem}[Measure concentration for $\so(d)$. E.g. \cite{meckes2019random} section 5.3]
    Let $f:\so(d)\to\reals$ be $L$-Lipschitz w.r.t. the Frobenius metric. We have $\mu\left(f \ge \E_\mu f + \epsilon\right)\le e^{-\frac{(d-2)\epsilon^2}{8L^2}}$
\end{theorem}

\begin{corollary}[Isoperimetric inequality for $\so(d)$]
    For $A\subseteq \so(d)$ we have $\mu(A_\epsilon) \ge 1 - e^{-\frac{(d-2)\epsilon^2(\mu(A))^2}{8}}$
\end{corollary}
\begin{proof}
    Define $f(x) = \max(0,\epsilon - d(A,x))$. We note that $f$ is $1$-Lipschitz. Also, $f(x)\ge \epsilon\cdot1_A(x)$. Hence, we have $\E_\mu f\ge\epsilon\E_\mu 1_A=\mu(A)\epsilon$. This implies that
    \[
    \mu\left(A_\epsilon^\complement\right) = \mu(f = 0) \le \mu\left(f \le \E_\mu f -\mu(A)\epsilon\right)\le e^{-\frac{(d-2)\epsilon^2(\mu(A))^2}{8}}
    \]
\end{proof}

We next extend the $\so(d)$-isoperimetric inequalities to metric spaces on which $\so(d)$ act. Let $X$ be a metric space. We say that $\so(d)$ {\em acts on $X$} if there is a mapping $(U,\bx)\mapsto U\bx$ such that  for any $U,V\in \so(d)$ and $\bx,\by\in X$ we have (i) $V(U\bx) = (VU)\bx$, (ii) $I\bx=\bx$, and (iii) $d(U\bx,U\by) = d(\bx,\by)$. We will refer to a metric space on which $\so(d)$ acts as an {\em $\so(d)$-metric-space}.

Examples of $\so(d)$-metric-space are $\reals^d$ an $\sphere^{d-1}$ (which are the input spaces for fully connected networks). Here, the action of $\so(d)$ is standard matrix multiplication. Additional examples are $\left(\reals^d\right)^n$ and $\left(\sphere^{d-1}\right)^n$ (which are the input spaces for convolutional networks). Here, the action is
\[
U\cdot(\bx_1,\ldots,\bx_n):= (U\bx_1,\ldots,U\bx_n)
\]
We say that an $\so(d)$-metric-space is {\em transitive} if for any $\bx,\by\in X$ there is $U\in \so(d)$ such that $\by = U\bx$. It is clear $\sphere^{d-1}$ is transitive and that for any $\so(d)$-metric-space $X$ and any $\bx\in X$, the {\em orbit} of $\bx$, i.e. $\cc(\bx) = \{U\bx : U\in\so(d)\}$, is transitive. On the other hand, $\reals^d$ as well as $\left(\reals^d\right)^n$ and $\left(\sphere^{d-1}\right)^n$ are not transitive. We define the Haar probability measure on a transitive $\so(d)$-metric-space $X$ as follows. Choose some $\bx\in X$. For any $A\subseteq X$ we let 
\[
\mu_X(A) = \mu(\{U\in\so(d) : U\bx\in A\})
\]
Note that $\mu_X$ in the l.h.s. denotes the Haar measure on $X$ while in the r.h.s. $\mu$ denotes the Haar measure on $\so(d)$.  We note that the definition of $\mu_X$ does not depend on the choice of $\bx$. Indeed, for any $\by\in X$, since $X$ is transitive we have $\by = V\bx$ for some $V\in\so(d)$. Hence,
\begin{eqnarray*}
\mu(\{U\in\so(d) : U\by\in A\}) &=& \mu(\{U\in\so(d) : UV\bx\in A\})
\\
&=& \mu(\{U\in\so(d) : U\bx\in A\}\cdot V^{-1})    
\\
&\stackrel{\mu\text{ is $\so(d)$-invariant}}{=}& \mu(\{U\in\so(d) : U\bx\in A\})    
\end{eqnarray*}
We also note that $\mu_X$ is $\so(d)$-invariant. Indeed, for $A\subseteq X$, $V\in \so(d)$ and $\bx\in X$ we have
\begin{eqnarray*}
\mu_X(VA) &=& \mu(\{U\in\so(d) : U\bx\in VA\})
\\
&=& \mu(\{U\in\so(d) : V^{-1}U\bx\in A\})
\\
&=& \mu(V\cdot\{U\in\so(d) : U\bx\in A\}\cdot )    
\\
&\stackrel{\mu\text{ is $\so(d)$-invariant}}{=}& \mu(\{U\in\so(d) : U\bx\in A\})    
\\
&=& \mu_X(A)
\end{eqnarray*}
And similarly $\mu_X(AV)=\mu_X(A)$.

We say that the action of $\so(d)$ on an $\so(d)$-metric-space $X$ is {\em $L$-Lipschitz} (or that $X$ is $L$-Lipschitz) if for any $\bx$ the mapping $U\mapsto U\bx$ is $L$-Lipschitz. 
We note that in the case that $X$ is transitive, the action is $L$-Lipschitz if the mapping $U\mapsto U\bx_0$ is $L$-Lipschitz for some $\bx_0\in X$. Indeed, in this case, given any $\bx\in X$, there is $V\in\so(d)$ such that $\bx = V\bx_0$. Now, the mapping $U\mapsto U\bx$ is $L$-Lipschitz as the  composition of the $1$-Lipschitz mapping $U\mapsto UV$ followed by the $1$-Lipschitz mapping $U\mapsto U\bx_0$. We will use the following lemma
\begin{lemma}\label{lem:conv_inp_space_is_lip}
    Let $\vbx = (\bx_1,\ldots,\bx_n)\in \left(\reals^d\right)^n$. We have that the action of $\so(d)$ on $\cc(\vbx)$ is $\|\vbx\|_\spec$-Lipschitz.
\end{lemma}
\begin{proof}
    Let $X$ be $d\times n$ the matrix whose $i$'th column in $\bx_i$. Since $\cc(\vbx)$ is transitive it is enough to show that $U\mapsto U\vbx$ is $\|\vbx\|_\spec$-Lipschitz. Indeed, for $U,V\in\so(d)$ we have
    \begin{eqnarray*}
        \| U\vbx - V\vbx\| &=&   \| ((U-V)\bx_1,\ldots,(U-V)\bx_n)\|
        \\
        & = & \| (U-V)X \|
        \\
        &\le & \| U-V \|\cdot \| X \|_\spec
    \end{eqnarray*}
\end{proof}
We will use the following generalization of the isoperimetric inequality on $\so(d)$. To this end, for $A\subseteq X$, $\bx\in X$ and $\epsilon>0$ we let $d(A,\bx) = \min_{\by\in A}d(\by,\bx)$ and $A_\epsilon = \left\{ \bx\in X : d(A,\bx)\le \epsilon \right\}$. 
\begin{theorem}[Isoperimetric inequality for $\so(d)$-metric-spaces]\label{thm:iso_for_inv}
Let $X$ be transitive $\so(d)$-metric-space. Assume that the action of $\so(d)$ is $L$-Lipschitz. Then, for any $A\subseteq X$ we have $\mu_X(A_\epsilon) \ge 1 - e^{-\frac{(d-2)\epsilon^2(\mu_X(A))^2}{8L^2}}$    
\end{theorem}
\begin{proof}
Choose some $\bx_0\in X$ and let $f:\so(d)\to X$ be the function $f(U) = U\bx_0$. We note that $f$ is $L$-Lipschitz and that $\mu(f^{-1}(C))=\mu_X(C)$ for any $C\subseteq X$. Now we have
\begin{eqnarray*}
    \mu_X(A_\epsilon) &=& \mu(f^{-1}(A_\epsilon))
    \\
    &\stackrel{(f^{-1}(A))_{\epsilon/L}\subseteq f^{-1}(A_\epsilon)}{\ge} & \mu((f^{-1}(A))_{\epsilon/L})
    \\
    &\stackrel{\so(d)\text{-isoperimetric inequality}}{\ge} & 1 - e^{-\frac{(d-2)\epsilon^2(\mu(f^{-1}(A)))^2}{8L^2}}
    \\
    &= & 1 - e^{-\frac{(d-2)\epsilon^2(\mu_X(A))^2}{8L^2}}
\end{eqnarray*}
\end{proof}

\section{Proof of the Main Results}
Let $X$ be an $L$-Lipschitz $\so(d)$-metric-space. For $f:X\to Y$ and $U\in\so(d)$ we define the function $Uf:X\to Y$ by $(Uf)(\bx)=f(U^{-1}\bx)$.
Let $f$ be a random function from $X$ to $\reals$. We say that $f$ is {\em $\so(d)$-invariant} if for any $U\in\so(d)$ the distribution of $Uf$ and $f$ are identical. An example for an $\so(d)$-invariant random function is a random convolutional or fully connected network.
\begin{lemma}\label{lem:conv_nets_are_inv}
    Let $f:\left(\reals^d\right)^n\to\reals$ be a random layered network whose first layer is a regular random convolutional layer that is independent from the remaining layers. Then, $f$ is $\so(d)$-invariant.
\end{lemma}
\begin{proof}
    We have $f = g\circ F_W$ where $F_W(\vbx) = (\sigma(WT_{1,s,w}(\vbx)),\ldots,\sigma(WT_{\frac{n-w}{s},s,w}(\vbx)))$ is a convolutional layer for a regular random matrix $W$ that is independent of $g$. For $U\in\so(d)$ We have
    \begin{eqnarray*}
        Uf(\vbx) &=& g\circ F_W(U^{-1}\vbx) 
        \\
        &=& g\left( \sigma(WT_{1,s,w}(U^{-1}\vbx)),\ldots,\sigma(WT_{\frac{n-w}{s},s,w}(U^{-1}\vbx)) \right)
        \\
        &=& g\left( \sigma(WU^{-1}T_{1,s,w}(\vbx)),\ldots,\sigma(WU^{-1}T_{\frac{n-w}{s},s,w}(\vbx)) \right)
        \\
        &=& g\circ F_{WU^{-1}}(\vbx)
    \end{eqnarray*}
    That is $Uf = g\circ F_{WU^{-1}}$. This implies that $Uf$ and $f=g\circ F_{W}$ are identically distributed: Since $W$ is regular, $W$ and $WU^{-1}$ are identically distributed and hence also $F_{W}$ and $F_{WU^{-1}}$. Since $g$ is independent of $W$, we conclude that $f$ and $Uf$ are identically distributed as well.
\end{proof}

We say that a random function $f:X\to\reals$ is $(q,p)$-balanced if w.p. $\ge q$ over the choice of $f$ we have that $\mu_X(f\ge 0) \ge p$ and $\mu_X(f\le 0) \ge p$. We note that if $f$ is a layered network with an odd activation, and $X=\cc(\vbx_0)$ for some $\vbx_0\in (\reals^d)^n$ then $f$ is $(1,1/2)$-balanced. Indeed, if $A = \{\vbx : f(\bx)\ge 0\}$ then, since $f$ is odd, $A^\complement = -I\cdot A$. Hence,
\[
\mu_X(f\ge 0) = \mu_X(A) \stackrel{\mu_X\text{ is $\so(d)$ invariant}}{=} \mu_X(-I\cdot A)= \mu_X(A^\complement)= \mu_X(f\le 0)
\]
In the second equality from the left we relied on the fact that $d$ is even and hence $-I\in\so(d)$.
The following theorem shows that a random function that is balanced and $\so(d)$-invariant has adversarial examples w.h.p. 

\begin{theorem}\label{thm:main}
    Let $X$ be transitive $L$-Lipschitz $\so(d)$-metric-space and let $\bx_0\in X$. Let $f:X\to\reals$ be a random function that is $\so(d)$-invariant and $(q,p)$-balanced. Then, w.p. $q(1 - 2e^{-\frac{(d-2)\epsilon^2p^2}{8L^2}})$ either $f(\bx_0)=0$ or there is $\bx$ such that $d(\bx,\bx_0)\le \epsilon$ and $\sign(f(\bx))\ne \sign(f(\bx_0))$
\end{theorem}
Before proving theorem \ref{thm:main} we note that it implies theorem \ref{thm:odd}. Indeed, we can take $X=C(\vbx_0)$ which is the $\|\vbx_0\|$-Lipschitz (lemma \ref{lem:conv_inp_space_is_lip}), $\epsilon = \frac{\tau\|\vbx_0\|_\spec}{\sqrt{d-2}}$ and $f$ to be a random layered network with odd activations, whose first layer is a regular random convolutional layer that is independent from the remaining layers. Since $f$ is $\so(d)$-invariant and $(1,1/2)$-balanced, theorem \ref{thm:main} implies that w.p. $1 - 2e^{-\frac{\tau^2}{32}}$ either $f(\bx_0)=0$ or there is $\bx$ such that $d(\bx,\bx_0)\le  \frac{\tau\|\vbx_0\|_\spec}{\sqrt{d-2}}$ and $\sign(f(\bx))\ne \sign(f(\bx_0))$. This implies theorem \ref{thm:odd}.

Theorem \ref{thm:main} also implies theorem \ref{thm:relu_nice} via a similar argument. Take again $X=C(\vbx_0)$, $\epsilon = \frac{\tau\|\vbx_0\|_\spec}{\sqrt{d-2}}$, and let $f$ to be a random convolutional ReLU random network as in theorem \ref{thm:relu_nice}. We have that $f$ is $\so(d)$-invariant (lemma \ref{lem:conv_nets_are_inv}). However, as opposed to random network with odd activations it is not immediate that $f$ is balanced. In lemma \ref{lem:relu_balanced} below we do show that this is the case that that $f$ is $(1-o_d(1),1/\log(d))$-balanced. Hence, theorem \ref{thm:main} implies that w.p. $(1-o_d(1))\left(1 - 2e^{-\Omega(\tau^2/\log^2(d))}\right)$ either $f(\bx_0)=0$ or there is $\bx$ such that $d(\bx,\bx_0)\le  \frac{\tau\|\vbx_0\|_\spec}{\sqrt{d-2}}$ and $\sign(f(\bx))\ne \sign(f(\bx_0))$. This implies theorem \ref{thm:relu_nice}.

\begin{proof} (of theorem \ref{thm:main})
    Let $U\in\so(d)$ be a random matrix. We can assume w.l.o.g. that $f=Ug$ where $g$ is distributed as $f$ and independent of $U$. Now, by conditioning on $g$, and since w.p. $\ge q$ we have $\mu_X(g\ge 0) \ge p$ and $\mu_X(g\le 0) \ge p$, the theorem follows from lemma \ref{lem:main} below.  
\end{proof}

\begin{lemma}\label{lem:main}
    Let $X$ be transitive $L$-Lipschitz $\so(d)$-metric-space and let $\bx_0\in X$. Let $f:X\to\reals$ be a function such that $\mu_X(f\ge 0) \ge p$ and $\mu_X(f \le 0) \ge p$. Let $U\in\so(d)$ be a random matrix.
    Then, w.p. $1 - 2e^{-\frac{(d-2)\epsilon^2p^2}{8L^2}}$ either $Uf(\bx_0)=0$ or there is $\bx$ such that $d(\bx,\bx_0)\le \epsilon$ and $\sign(Uf(\bx))\ne \sign(Uf(\bx_0))$
\end{lemma}
\begin{proof}
Let $A^+ = \{\bx : f(\bx)>0\}$ and $A^- = \{\bx : f(\bx)<0\}$. By theorem \ref{thm:iso_for_inv} and the fact that $\mu_X(A^+) \ge p$ and $\mu_X(A^-) \ge p$ we have
\[
\mu_X(A^+_\epsilon\cap A^-_\epsilon) \ge 1 -2 e^{-\frac{(d-2)\epsilon^2p^2}{8L^2}}
\]
Hence, w.p. $1 -2 e^{-\frac{(d-2)\epsilon^2p^2}{8L^2}}$ over the choice of $U$ we have that $U^{-1}\bx_0\in A^+_\epsilon\cap A^-_\epsilon$. It is enough top show that in this case either $Uf(\bx_0)=0$ or there is $\bx$ such that $d(\bx,\bx_0)\le \epsilon$ and $\sign(Uf(\bx))\ne \sign(Uf(\bx_0))$.

Indeed, suppose that $f(U^{-1}\bx_0)> 0$ (the case $f(U^{-1}\bx_0)< 0$ is similar and the case $f(U^{-1}\bx_0)= 0$ is immediate). Since $U^{-1}\bx_0\in  A^-_\epsilon$ there is $\by\in A^-$ such that $d(\by, U^{-1}\bx_0)$. Let $\bx = U\by$. We have
\[
d(\bx,\bx_0) = d(U^{-1}\bx,U^{-1}\bx_0)= d(U^{-1}U\by,U^{-1}\bx_0)= d(\by,U^{-1}\bx_0)\le\epsilon
\]
Likewise, 
\[
Uf(\bx) = f(U^{-1}\bx) =  f(U^{-1}U\by) = f(\by) \le 0
\]
Hence, $\sign(Uf(\bx))\ne \sign(Uf(\bx_0))$
\end{proof}

\section{Balance-ness of random ReLU networks}
In this section we will prove the following lemma
\begin{lemma}\label{lem:relu_balanced}
    Fix a constant $l$.
    Let $f:(\reals^d)^n\to\reals$ be a random layered network with $l$ independent convolutional Xavier layers with ReLU activation, except the last layer which has linear activation.
    Assume that the number of channels in each layer, as well as $d$ are all $\omega(\log(nd))$. Fix $\vbx_0\in \left(R\cdot\sphere^{d-1}\right)^n$. Then, $f|_{\cc(\vbx_0)}$ is $(1-o_d(1),1/\log(d))$-balanced. 
\end{lemma}
\subsection{Proof of Lemma \ref{lem:relu_balanced}}
Since the ReLU is homogeneous, we can assume w.l.o.g. that $R=\sqrt{d}$.
Let $\vbx^1,\ldots,\vbx^m$ be i.i.d. uniform points in $X=\cc(\bx_0)$ with $m=\lfloor\sqrt{\log(d)}\rfloor$. We say that $f$ {\em separates} $\vbx^1,\ldots,\vbx^m$ if there are $1\le i,j\le m$ such that $f(\vbx^i)<0<f(\vbx^j)$. Let $A_m$ be the event that $f$ separates $\vbx^1,\ldots,\vbx^{m}$. We will show that 
\begin{lemma}\label{lem:core_of_relu_balanced}
$\Pr_{\vbx^1,\ldots,\vbx^m,f}(A_m) = 1-o_d(1)$.    
\end{lemma}
Before proving lemma \ref{lem:core_of_relu_balanced}, we will explain how it implies lemma \ref{lem:relu_balanced}. . We have
\begin{eqnarray*}
    \Pr_{\vbx^1,\ldots,\vbx^m,f}(A_m) &= & \Pr(f\text{ is $\beta$-balanced})\E_{f}\left[\Pr_{\vbx^1,\ldots,\vbx^m}(A_m)|f\text{ is $\beta$-balanced}\right]
    \\
    && + \Pr(f\text{ is not $\beta$-balanced})\E_{f}\left[\Pr_{\vbx^1,\ldots,\vbx^m}(A_m)|f\text{ is not $\beta$-balanced}\right]
    \\
    &\le & \Pr(f\text{ is $\beta$-balanced}) + \E_{f}\left[\Pr_{\vbx^1,\ldots,\vbx^m}(A_m)|f\text{ is not $\beta$-balanced}\right]
    \\
    &\le & \Pr(f\text{ is $\beta$-balanced}) + 1-(1-\beta)^m
\end{eqnarray*}
Taking $\beta = 1/m^2$ we conclude lemma \ref{lem:relu_balanced} as
\[
\Pr(f\text{ is $1/\log(d)$-balanced}) \ge \Pr(f\text{ is $1/m^2$-balanced})\ge \Pr_{\vbx^1,\ldots,\vbx^m,f}(A_m) - 1  +  (1-1/m^2)^m = 1-o_d(1)
\]
Hence, it is enough to prove lemma \ref{lem:core_of_relu_balanced}.
We have that $f=F_{W_{l}}\circ\ldots\circ F_{W_1}$ where $F_{W_1},\ldots,F_{W_{l}}$ are independent Xavier convolutional layers with ReLU activation in all layers, except the last one ($F_{W_{l}}$) whose activation is the identity function. Assume that $F_{W_v}:\left(\reals^{d_{v-1}}\right)^{n_{v-1}}\to \left(\reals^{d_{v}}\right)^{n_{v}}$ is a convolutional layer of width $w_v$, stride $s_v$ and $d_v$ channels. Note that since the range of $f$ is $\reals$ we have $n_l=d_l=1$.
Denote $\Psi = \sqrt{\frac{2^{l-1}}{n_{l-1}d_{l-1}}}F_{W_{l-1}}\circ\ldots\circ F_{W_1}$.
We will prove lemma \ref{lem:core_of_relu_balanced} in three steps corresponding to the following three lemmas
\begin{lemma}\label{lem:core_1}
    W.p. $1-o_d(1)$, for any $1\le i<j\le m$ and $1\le t\le n$, $\inner{\bx^i_t,\bx^j_t} \le d/2$.
\end{lemma}

\begin{lemma}\label{lem:core_2}
    Given that for any $1\le i<j\le m$ and $1\le t\le n$, $\inner{\bx^i_t,\bx^j_t} \le d/2$. We have w.p. $1-o_d(1)$ that for any $1\le i<j\le m$ it holds that $\|\Psi(\vbx^i)-\Psi(\vbx^j)\|\ge \beta$ and $\|\Psi(\vbx^i)\|\le 2$, where $\beta>0$ is a constant depending only on the depth $l$
\end{lemma}

\begin{lemma}\label{lem:core_3}
    Fix a constant $\beta>0$. Given that for any $1\le i<j\le m$ it holds that $\|\Psi(\vbx^i)-\Psi(\vbx^j)\|\ge \beta$ and $\|\Psi(\vbx^i)\|\le 2$ we have w.p. $1-o_d(1)$ that $f$ separates $\vbx^1,\ldots\vbx^m$ 
\end{lemma}
Lemmas \ref{lem:core_1}, \ref{lem:core_2} and \ref{lem:core_3} clearly implies lemma \ref{lem:core_of_relu_balanced}, so it remains to prove them. Lemma \ref{lem:core_1} is a simple consequence of the fact that if $\bx,\by\in \sqrt{d}\sphere^{d-1}$ are uniform and independent then $\Pr(\inner{\bx,\by} \ge t)\le e^{-\frac{t^2}{2d}}$ (e.g. \cite{vershynin2018high} chapter 3). Hence, the probability that for some $1\le i<j\le m$ and $1\le t\le n$, $\inner{\bx^i_t,\bx^j_t} > d/2$ is at most $2\binom{m}{2}ne^{-\frac{d}{8}} = o_d(1)$ where the last inequity is correct as we assume that $d=\omega(\log(n))$.
It remains to prove lemmas \ref{lem:core_2} and \ref{lem:core_3}, which we do next

\subsubsection{Proof of lemma \ref{lem:core_2}}
In order to prove lemma \ref{lem:core_2} we will use a result of \citet{daniely2016toward}, which shows that w.h.p. $\inner{\Psi(\vbx),\Psi(\vby)}\approx k(\vbx,\vby)$ where $k:\left(\sqrt{d}\cdot\sphere^{d-1}\right)^n\times \left(\sqrt{d}\cdot\sphere^{d-1}\right)^n\to\reals$ is a kernel function with an explicit formula, which we define next. Let
\[
\hat{\sigma}(u) = \frac{1}{\pi}\left(u(\pi-\arccos(u))+\sqrt{1-u^2}\right)
\]
We note that $\hat{\sigma}$ is non-negative and monotone in $[-1,1]$, and satisfies $\hat\sigma(1)=1$.
For $\vbx,\vby\in \left(\sqrt{d}\cdot\sphere^{d-1}\right)^n$, $0\le v\le l-1$ and $1\le t\le n_i$ we define recursively
\[
k_{0,t}(\vbx,\vby) = \frac{\inner{\bx_t,\by_t}}{d}\text{ and }k_{v,t}(\vbx,\vby) = \hat{\sigma}\left(\frac{1}{w_v}\sum_{r=1}^{w_v}k_{v-1,s_v(t-1)+r}(\vbx,\vby)\right)
\]
Finally, let $k(\vbx,\vby) = k_{l,1}(\vbx,\vby)$. The following lemma shows that w.p. $1-o_d(1)$, for any $1\le i,j\le m$, $\inner{\Psi(\vbx^i),\Psi(\vbx^i)}=k(\vbx^i,\vbx^j)+o_d(1)$

\begin{lemma}\label{lem:emp_kernel}\cite{daniely2016toward}
    Suppose that for any $1\le v\le l-1$ we have $d_i \gtrsim \frac{l^2\log(ln/\delta)}{\epsilon^2}$ and that $\epsilon \lesssim \frac{1}{l}$. Then,
    \[
    \Pr\left(\left|k(\vbx,\vby) - \inner{\Psi(\vbx),\Psi(\vby)}\right| > \epsilon\right) <\delta
    \]
\end{lemma}
Next, we show by induction on $v$ that if for any $1\le i<j\le m$ and $1\le t\le n$, $\inner{\bx^i_t,\bx^j_t} \le d/2$ then 
$k_{v,t}(\vbx^i,\vbx^j)\le  \hat{\sigma}^{\circ v}(1/2) < 1$ for $i\ne j$ and $k_{v,t}(\vbx^i,\vbx^i)=1$. Indeed,
\begin{eqnarray*}
    k_{v,t}(\vbx^i,\vbx^j) &=& \hat{\sigma}\left(\frac{1}{w_v}\sum_{r=1}^{w_v}k_{v-1,s_v(t-1)+r}(\vbx^i,\vbx^j)\right)
    \\
    &\stackrel{\hat{\sigma}\text{ monotonicity and induction hypothesis}}{\le} & \hat{\sigma}(\hat{\sigma}^{\circ (v-1)}(1/2))
    \\
    &=& \hat{\sigma}^{\circ v}(1/2)
\end{eqnarray*}
and
\[
k_{v,t}(\vbx^i,\vbx^i) = \hat{\sigma}\left(\frac{1}{w_v}\sum_{r=1}^{w_v}k_{v-1,s_v(t-1)+r}(\vbx^i,\vbx^i)\right)
    \stackrel{\text{ induction hypothesis}}{=}  \hat{\sigma}(1)=1
\]
Hence, we have w.p. $1-o_d(1)$ that $\|\Psi(\vbx_i)\|^2 = 1 + o_d(1)$ and that
\begin{eqnarray*}
    \|\Psi(\vbx_i)-\Psi(\vbx_j)\|^2 = 1 - 2k(\vbx^i,\vbx^j) + 1 + o_d(1) \ge 2(1-\hat{\sigma}^{\circ (l-1)}(1/2)) + o_d(1)
\end{eqnarray*}
which proves lemma \ref{lem:core_2}

\subsubsection{Proof of lemma \ref{lem:core_3}}

\begin{theorem}[Sudakov e.g. \cite{van2014probability} chapter 6.1]\label{lem:sudakov}
    Let $\bx_1,\ldots,\bx_m\in \reals^d$ be vectors such that $\|\bx_i-\bx_j\|\ge\alpha$ for any $1\le i<j\le m$. Let $\bw\in\reals^d$ be standard Gaussian. Then, $\E \max_i \inner{\bw,\bx_i} \gtrsim \alpha\sqrt{\log(m)}$ 
\end{theorem}
Let $Z = \max_{1\le i\le m}f(\vbx^i)$. Given that for any $1\le i<j\le m$ it holds that $\|\Psi(\vbx^i)-\Psi(\vbx^j)\|\ge \beta$ and $\|\Psi(\vbx^i)\|\le 2$ we have that $\E Z = \omega(1)$ by Sudakov Lemma. Since $W_l\mapsto \max_{i}\inner{W_l,\sqrt{d_{l-1}n_{l-1}}\Psi(\bx_i)}$ is $2$-Lipchitz, and $W_l$ is a matrix with i.i.d. centered Gaussians with variance $\frac{1}{d_{l-1}n_{l-1}}$,
Gaussian concentration (e.g. \cite{vershynin2018high} section 5.2.1) implies that $\var(Z) \le 4$. Hence, w.p. $1-o_d(1)$, $Z>0$, implying that there is $i$ such that $f(\vbx^i)>0$. Similarly, w.p. $1-o_d(1)$ there is also $i$ such that $f(\vbx^i)<0$. This proves lemma \ref{lem:core_3}

\bibliography{bib}

\newpage